\documentclass{article} 
\PassOptionsToPackage{sort&compress}{natbib}
\usepackage{iclr2027_conference,times}
\setcitestyle{numbers,square,citesep={,},aysep={,},yysep={;}}

\usepackage{amsmath,amsfonts,bm}

\def\eqref#1{equation~\ref{#1}}

\def\1{\bm{1}}

\DeclareMathAlphabet{\mathsfit}{\encodingdefault}{\sfdefault}{m}{sl}
\SetMathAlphabet{\mathsfit}{bold}{\encodingdefault}{\sfdefault}{bx}{n}

\usepackage{amsthm}
\usepackage{xcolor}
\usepackage{hyperref}
\hypersetup{
    colorlinks=true,
    linkcolor={[HTML]{D75545}},
    citecolor={[HTML]{6788ED}},
    urlcolor=blue,
    pdfborder={0 0 0},
    colorlinks=true
}
\usepackage{url}

\newtheorem{definition}{Definition}[section]
\newtheorem{assumption}{Assumption}[section]
\newtheorem{proposition}{Proposition}[section]
\newtheorem{example}{Example}[section]
\newtheorem{remark}{Remark}[section]
\newtheorem{lemma}{Lemma}[section]
\usepackage{tcolorbox}
\usepackage{booktabs}  
\usepackage{multirow}  
\usepackage{graphicx}   
\usepackage{caption}
\usepackage[table]{xcolor} 
\usepackage{amssymb}
\usepackage{algorithm}
\usepackage{algorithmic}
\usepackage{float}              
\usepackage{tikz}
\usetikzlibrary{tikzmark,fit}
\usepackage{makecell}
\usepackage[capitalize]{cleveref}
\usepackage{wrapfig}
\crefname{assumption}{Assumption}{Assumptions}
\usepackage{titletoc}
\usepackage{enumitem}
\usepackage{longtable}
\newcommand{\xhdr}[1]{{\noindent \textbf{#1}}}

\definecolor{cVerify}  {HTML}{979797}
\definecolor{cSparsify}{HTML}{DD7D77}
\definecolor{cSplit}   {HTML}{96B4FC}

\newcommand{\icmt}[1]{{\color{black!55}\;$\triangleright$\,#1}}
\newcommand{\gap}{\vspace{0.30em}}
\newcommand{\swatch}[1]{\textcolor{#1}{\rule[-0.05em]{0.7em}{0.7em}}}
\def\algw{0.435\textwidth}        

\title{What Makes a Good Symbol for Hybrid\\ Discrete–Continuous Planning?}

\author{Yiyuan Pan$^{1}$\quad Changliu Liu$^{1}$\quad Jiayuan Mao$^{2}$\\
$^{1}$Robotics Institute, Carnegie Mellon University\qquad $^{2}$University of Pennsylvania
}

\iclrfinalcopy 
\newcommand{\headerlogos}{\raisebox{1.4pt}{\includegraphics[height=12.4pt]{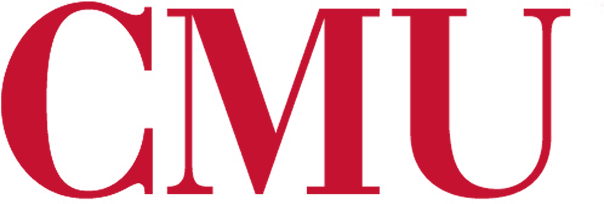}}\hspace{8pt}%
  \raisebox{1.4pt}{\includegraphics[height=12.4pt]{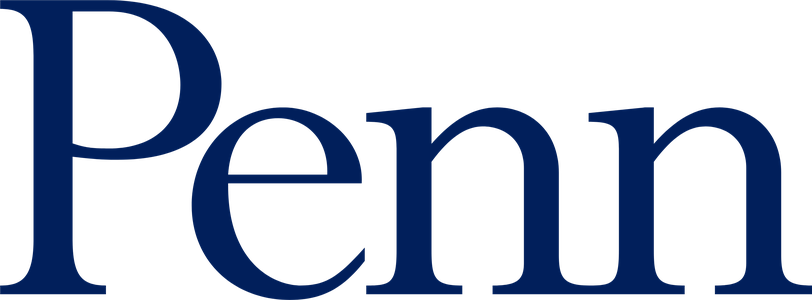}}}
\fancypagestyle{logofirst}{%
  \fancyhf{}\lhead{\headerlogos}\cfoot{\thepage}%
  \renewcommand{\headrulewidth}{0.4pt}}

\begin{document}

\maketitle
\thispagestyle{logofirst}
\pagestyle{plain}

\begin{abstract}
\vspace{-0.5em}
The physical world has continuous state and action spaces, yet robot planning often benefits from hybrid discrete-continuous formulations that explicitly expose combinatorial structure, such as contact modes or discrete primitive actions. Existing methods typically rely on human-defined symbol sets or symbols derived from language supervision, without asking what makes a symbol computationally {\it useful}. In this paper, we take an inference-algorithm--centric view to define {\it usefulness}: a good symbol should expose the discrete commitments that simplify solving the original planning problem. We first introduce the concept of \emph{solver-sufficient symbolization} by studying finite-horizon piecewise-affine systems, for which solver sufficiency can be precisely characterized, and develop a counterexample-guided algorithm for symbol invention. We then extend this principle to invent symbols for optimization objectives parameterized by general neural networks as energy functions, showing that solver-sufficient symbols can nevertheless be induced from their continuous optimization landscapes. Experiments on simulated and real-robot tasks show that the resulting symbols reduce solving cost while preserving solution quality, while also improving interpretability and steerability. Furthermore, our theoretical and algorithmic insights help explain why certain human-defined symbols are computationally useful.

\vspace{2pt}
\textbf{Webpage}: \href{https://pyywill.github.io/good-symbols/}{\textcolor[HTML]{D75545}{https://pyywill.github.io/good-symbols}}

\end{abstract}

\vspace{-15pt}

\section{Introduction}\label{sec:intro}
\vspace{-0.5em}
The physical world has continuous state and action spaces, yet planning often benefits from hybrid discrete--continuous formulations that explicitly expose combinatorial structure, such as contact modes in manipulation (e.g., {\it fixed and sliding contact}s)~\citep{posa2016optimization,cheng2022contact} and symbolic relational states (e.g., {\it holding}, {\it in})~\citep{garrett2021integrated}. From a resource-rational perspective in cognitive science, a useful symbol can expose key commitments and abstract away details to support efficient planning under bounded computation \citep{lieder2020resource,ho2022people}: When a person wants to go to the office with a bag, the motion of the person and the bag must initially be considered jointly. Once ``the bag is being carried'' is established as a symbolic commitment, the bag’s motion can be abstracted away when planning the person’s path. 

While humans can invent symbols to reduce the complexity of decision-making, deep learning--based solutions to decision-making often choose to encode the full decision landscape within a single continuous-space model~\citep{chi2025diffusion,janner2022planning}. They remove the need to declare discrete modes and dependencies {\it a priori}, but fundamentally cannot remove the underlying combinatorial structure of the problem. For example, for neural-network--parameterized dynamics models, basins and barriers appear in the optimization landscape. This makes planning or optimization hard as tasks and horizons grow \citep{florence2022implicit}.
To this end, researchers have considered exposing symbol commitments explicitly as state variables, such as the object relation of person-carrying-bag, to break much of the global coupling in planning.
The symbol invention problem studies inducing these symbols from data, but typically from predefined grammars and labels \citep{silver2023predicate,athalye2025predicate}. Such symbols are often interpretable and descriptive, yet their construction typically relies on heuristics about the problem structure, such as sparsity and locality. As a result, they may hinder rather than simplify planning. In this paper, we address this fundamental question:
\begin{tcolorbox}[top=1pt, bottom=1pt, left=0pt, right=0pt]
\begin{center}
\textit{What makes a symbol {\bf useful} in hybrid discrete-continuous planning?}
\end{center}
\end{tcolorbox}
Our definition of a useful symbol comes from the perspective of inference-time solving. We consider a general family of hybrid discrete-continuous planning algorithms based on \emph{search-then-optimize} \citep{geoffrion1972generalized,dechter1990enhancement}: symbols partition the entire decision space into conditioned subproblems, within which continuous-space optimization is performed. Let $x\in\mathcal X$ denote all decision variables and $h$ the planning objective. Our goal is to solve the hybrid optimization problem:
\begin{equation}
    \mathcal{P}_{\mathrm{joint}}:\
    \min_{x\in\mathcal X} h(x).
    \label{eq:general_joint_problem}
\end{equation}
Let $\Phi:\mathcal X\rightarrow\Psi$ denote a symbolization map, where each $\psi\in\Psi$ is a symbolic assignment. Fixing $\psi$ induces the conditioned problem $\mathcal{P}_{\mathrm{joint}}^{\psi}:  
\min\nolimits_{x\in\Phi^{-1}(\psi)} h(x|\psi)$. This yields the decomposition:
\begin{equation}
    \underbrace{\operatorname{Solve}(\mathcal{P}_{\mathrm{joint}})}_{\text{\textit{Joint Solving}}}
    =
    \underbrace{\min\nolimits_{\psi\in\Psi}}_{\text{\textit{Symbolic Search}}}
    \underbrace{\operatorname{Solve}(\mathcal{P}_{\mathrm{joint}}^{\psi})}_{\text{\textit{Grounded Solving}}}.
    \label{eq:symbol_guided_decomposition}
\end{equation}
whose na\"ive solving complexity is $\mathcal{O}(2^{|\Psi|}\cdot {\rm SolveTime(\mathcal{P}^\psi_{\rm joint})})$. In this paper, we focus on \emph{solver-sufficient} symbols: they preserve the global optimum in \cref{eq:symbol_guided_decomposition} while reducing the overall complexity of conditioned solve ${\operatorname{Solve}(\mathcal{P}_{\mathrm{joint}}^{\psi})}$. In a planning context, such symbols should decouple temporal and spatial dependencies that would otherwise require joint reasoning, allowing each grounded solve to involve only the local variables relevant to the selected symbolic commitment.


To put these in practice, we begin with simple piecewise affine (PWA) systems \citep{rodrigues2003observer}, where the solver sufficiency and complexity can be studied exactly using TU-dimension tools \citep{hoffman2010integral}. We then extend this perspective to general settings involving neural networks, focusing on energy-based formulations which include frameworks such as diffusion and flow-matching policies \citep{chi2025diffusion,lipman2022flow}. In both settings, we develop algorithms for
both symbol invention and symbol-guided planning. Our contributions are threefold:

\begin{itemize}[leftmargin=1.2em, itemsep=2pt, parsep=0pt, topsep=2pt]
\item We introduce solver-sufficient symbolization, in which symbolic commitments preserve optimality while reducing grounded solving cost. This formalizes the symbolic invention problem as an optimization problem rather than a purely descriptive interface.

\item We provide a mathematical foundation for analyzing why certain human-defined concepts are useful in robotic planning, and characterize task hardness by the amount of irreducible discrete structure that must remain explicit for faithful solving.

\item We develop and evaluate a symbol invention and hybrid planning framework that retains key advantages of symbolic planning, including interpretability and behavior steerability.
\end{itemize}

\paragraph{Overview of related work.}
In search-then-optimize planning \citep{geoffrion1972generalized,audet2026partitioned}, the key step is to choose a conditioned subproblem, such as mode assignments in MILPs \citep{marcucci2020warm,cauligi2021coco} and energy-based model inference with multi-seed initialization \citep{florence2022implicit,chi2025diffusion}. This choice is hard because decisions are coupled. Researchers have formalized such coupling in AI planning problems~\citep{chen2007act,mao2023planning}. Hybrid discrete-continuous planners aim to address this by introducing symbols \citep{konidaris2018skills,garrett2020pddlstream}, but their symbols come from labels or simple algebraic combinations of state variables \citep{silver2023predicate,athalye2025predicate}. We instead derive symbols from the solver itself: the commitments required before make the conditioned problem cheap to solve, and the commitment number measures the problem's coupling and irreducible combinatorial hardness. We include a more comprehensive discussion of related work in Appendix~\ref{app:related_works}.


\vspace{-0.5em}
\section{Solver Sufficiency in PWA Systems}
\vspace{-0.5em}

\begin{figure*}
    \centering
    \includegraphics[width=\linewidth]{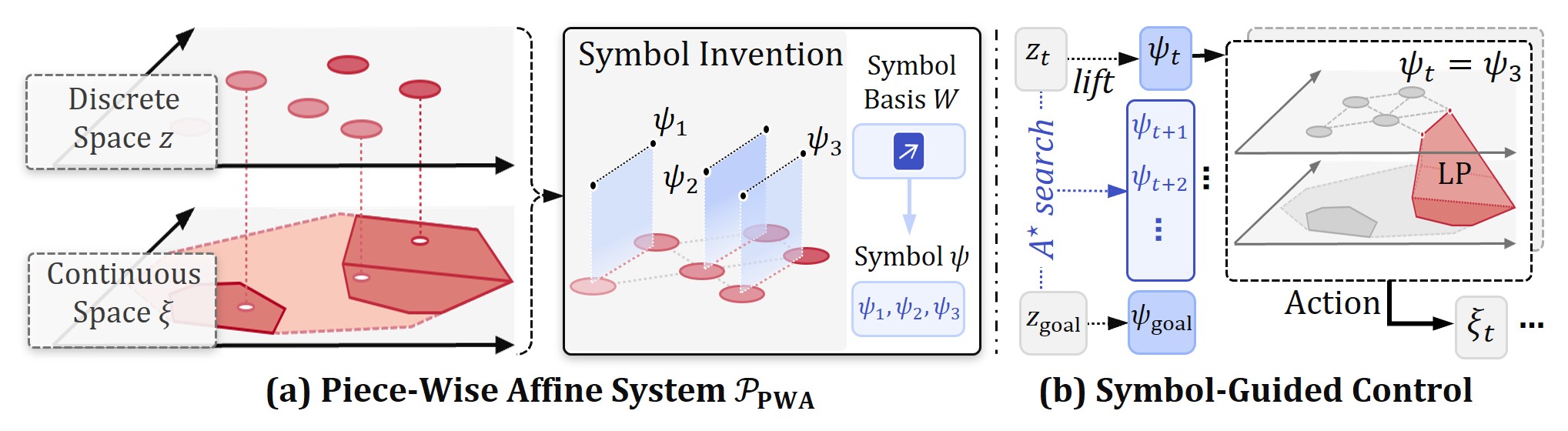}
    \vspace{-20pt}
    \caption{\textbf{Solver-Sufficient Symbolization for PWA Systems.} Relative affine-TU groups optimization regions with a compact symbol set. Each conditional optimization subproblem $\mathcal{P}_{\mathrm{joint}}^{\psi}$, given by an assignment of discrete symbols $\psi$, reduces to an LP-relaxable optimization problem.}
    \label{fig:overview_pwa}
    \vspace{-10pt}
\end{figure*}

We first build the intuition for solver-sufficient symbolization in piecewise affine (PWA) systems. In PWA, a finite-horizon decision-making problem is formulated as a mixed-integer linear program (MILP). Here, discrete variables encode {\it modes}, such as contacts, while continuous variables describe the continuous states and actions governed by mode-dependent affine dynamics. Such formulations are widely used in contact-rich manipulation, such as linear complementarity models~\citep[][\textit{inter alia}]{jin2024task}. Formally, let $z\in\{0,1\}^{n_d}$ denote discrete mode assignments and $\xi\in\mathbb{R}^{n_c}$ denote continuous states and actions. With $x=(z,\xi)$, \cref{eq:general_joint_problem} specializes to
\begin{equation}
\begin{aligned}
\mathcal{P}_{\mathrm{PWA}}:\quad
\min\nolimits_{z,\xi}\quad
& c_d^\top z+c_c^\top \xi\\
\mathrm{s.t.}\quad
& E^d z=e^d,\ P^d z\le p^d;\quad E^c \xi=e^c,\ P^c \xi\le p^c,\\
& B^d z+B^c \xi\le b; \quad z\in\{0,1\}^{n_d},\ \xi\in\mathbb{R}^{n_c}.
\end{aligned}
\label{eq:pwa_joint_problem}
\end{equation}
The first set of constraints describes logical constraints over modes and linear constraints over continuous variables. The last constraint couples mode assignments with continuous states and actions. Fixing all variables in $z$ leaves a simple linear program (LP) problem, but searching over these assignments is combinatorially expensive. Solver-sufficient symbolization seeks a more compact discrete commitment set that yields better tractability.

\vspace{-0.5em}
\subsection{Affine-TU Symbolization in Integer Programs}
\vspace{-0.5em}
A classical source of tractability in integer programming (IP) is \emph{total unimodularity} (TU). It is a structural condition under which integrality constraints can be removed without affecting optimality.
\begin{definition}[Total Unimodularity]
\label{theo:hoffman}
A matrix $A\in\mathbb{Z}^{m\times n}$ is
\emph{totally unimodular} if every square submatrix of $A$
has determinant in $\{-1,0,1\}$.
\end{definition}
For an IP $\{x\in \mathbb{Z}^n_{\geq 0}:Ax\leq b\}$, by the Hoffman--Kruskal theorem~\citep{hoffman2010integral},
$A$ is TU if and only if, for every $b\in\mathbb{Z}^m$, the polyhedron $\{x\in\mathbb{R}_{\ge 0}^n:Ax\le b\}$ is integral. TU therefore certifies that the corresponding integer program can be solved through its LP relaxation (by removing the integer constraints on $x$). When the original matrix $A$ is not TU, we can create TU subproblems by fixing a few ``summaries'' of the decision variables. \emph{Affine TU} formalizes this:

\begin{definition}[Affine TU~\citep{bader2018mixed}]
Let $A\in\mathbb{Z}^{m\times n}$.
A $k$-row \emph{affine TU decomposition} of $A$ consists of
matrices
$
\widetilde A\in\mathbb{Z}^{m\times n},\
U\in\mathbb{Z}^{m\times k},\
W\in\{0,\pm1\}^{k\times n}
$
such that
\[
A=\widetilde A+UW,
\qquad
[\widetilde A;W]\ \text{is TU} \footnote{$[A;B]$ denotes vertical concatenation. The same applies thereafter.}.
\]
The \emph{affine TU dimension} of $A$ is the minimum $k$ for which such a decomposition exists.
\end{definition}
To see a connection to symbol invention, we can view the discrete summary $\psi=Wx\in\mathbb{Z}^k$ as the assignment to a set of {\it new} symbols induced by $W$. For a fixed assignment $\psi$, the original constraints $Ax\le b$ becomes $\widetilde A x\le b-U\psi$. Since $[\widetilde A;W]$ is TU, this conditioned IP problem has an exact LP relaxation. As a result, instead of searching for the integral assignment of $x$, we can search over the integral assignments for $\psi$ and solve LPs given each assignment. We get a speed-up when $k \ll n$. This is precisely the principle of solver-sufficient symbolization: introduce discrete symbols that capture the critical decisions needed to make the remaining optimization problem easy. 

\vspace{-0.5em}
\subsection{Relative Affine-TU Symbolization in MILP}
\vspace{-0.5em}
Affine TU applies to pure integer programs.
However, for \cref{eq:pwa_joint_problem}, we additionally need to account for the coupling between discrete modes and continuous trajectories. The equalities $E^d z=e^d$ already fix certain summaries of $z$, so the symbols to introduce need only capture information beyond those equalities ({\it relative} to $E^d$). This motivates the following extension:\footnote{We assume $E^d$ is TU and $B^d\in\{0,\pm1\}^{m_B\times n_d}$. These assumptions accommodate common PWA systems in which discrete symbols enter through mode selection and signed constraints. See discussion in Appendix~\ref{app:assumptions}.
}

\begin{definition}[Relative Affine-TU]
\label{def:rat_certificate}
A matrix $W\in\{0,\pm1\}^{k\times n_d}$ is a
\emph{relative affine-TU certificate} for the discrete interface
$(E^d,P^d,B^d)$ if there exist integer matrices
$\widetilde P^d,U_P,V_P$ and rational matrices $U_B,V_B$
such that
\[
    (1) P^d = \widetilde P^d+U_PW+V_PE^d, \quad (2) B^d = U_BW+V_BE^d, \quad (3) M := [\widetilde P^d;E^d;W]\ \text{is TU}.
\]
The \emph{relative affine-TU dimension}
$\kappa(\mathcal{P}_{\rm joint})$ is the minimum number of rows $k$ among such certificates. Such decomposition always exists, since $W$ can take $\mathbf{I}_{n_d\times n_d}$. Thus, $\kappa(\mathcal{P}_{\rm joint})\leq n_d$.
\end{definition}

To gain some insight, similarly consider fixing a symbol assignment $\psi=Wz$: as illustrated in \cref{fig:overview_pwa} (a), the map $W$ groups primitive mode assignments that share the same symbolic value, depicted by the blue slices labeled $\psi_1,\psi_2,\psi_3$. Fixing $\psi$ therefore corresponds to choosing a particular group of assignments of all combinatorial assignments of $z$. Within this group, the discrete variable $z$ can be relaxed to continuous and the inequalities reduce to $\widetilde P^d z \le p^d-U_P\psi-V_Pe^d$, which is an integral polytope. That is, the concrete mode decisions of the full $z$ can be recovered after a linear optimization. Meanwhile, the coupling constraints reduce to $B^c\xi\le b-U_B\psi-V_Be^d,$ which depend only on the selected symbols. 
This is the mechanism underlying the symbol-guided control in \cref{fig:overview_pwa} (b): a planner searches over symbolic commitments $\psi$ and, once a commitment is fixed (e.g., as $\psi_3$ in the figure), solves the corresponding residual problem exactly through an LP relaxation.

Fundamentally, the relative affine-TU dimension $\kappa(\mathcal{P}_{\rm joint})$ quantifies the irreducible discrete variables, implying the problem hardness is characterized by at most $2^{\kappa(\mathcal{P}_{\rm joint})}$ modes. Such gains can be quantified in terms of formal complexity statements in Appendix~\ref{app:rat_complexity}. Furthermore, the minimum number of symbol coordinates is bounded, as given by the following lemma.
\begin{lemma}[Bounds on Relative Affine-TU Dimension]
\label{prop:rat_bounds}
Under the assumptions above, $
\operatorname{rank}[B^d;E^d]
-\operatorname{rank}(E^d)
\;\le\;
\kappa(\mathcal{P}_{\rm joint})
\;\le\;
n_d.
$ Proofs are in Appendix~\ref{app:rat_bounds}.
\end{lemma}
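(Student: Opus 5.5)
The upper bound needs no new work; the lower bound follows from condition (2) of \cref{def:rat_certificate} by a rank count on row spaces.

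\emph{Upper bound.} Take $W=\mathbf I_{n_d\times n_d}$. Then set $\widetilde P^d=0$, $U_P=P^d$, $V_P=0$, $U_B=B^d$, $V_B=0$, so conditions (1) and (2) hold trivially. For (3), the matrix $M=[0;E^d;\mathbf I]$ must be TU. Since $E^d$ is TU by assumption, appending identity rows preserves TU: Laplace-expanding any square submatrix along a unit row reduces it to a smaller submatrix, and by induction this ends at a submatrix of $E^d$. Zero rows only contribute zero determinants. Hence $W=\mathbf I$ is a certificate with $n_d$ rows, and $\kappa(\mathcal P_{\rm joint})\le n_d$.

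\emph{Lower bound.} Fix any certificate $W$ with $k$ rows. Condition (2), $B^d=U_BW+V_BE^d$, says every row of $B^d$ is a rational combination of rows of $W$ and rows of $E^d$. Writing $\operatorname{row}(\cdot)$ for the row space over $\mathbb Q$ (equivalently $\mathbb R$), this gives
\[
\operatorname{row}[B^d;E^d]\subseteq \operatorname{row}(W)+\operatorname{row}(E^d)=\operatorname{row}[W;E^d].
\]
Taking dimensions,
\[
\operatorname{rank}[B^d;E^d]\le \operatorname{rank}[W;E^d]\le \operatorname{rank}(W)+\operatorname{rank}(E^d)\le k+\operatorname{rank}(E^d).
\]
Rearranging gives $k\ge \operatorname{rank}[B^d;E^d]-\operatorname{rank}(E^d)$. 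Since this holds for every certificate, it holds in particular for a minimum one, which yields the left inequality.

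The only subtle point is the field over which rank is taken. $U_B$ and $V_B$ are only rational, so the argument must be done over $\mathbb Q$ or $\mathbb R$, which is harmless since $\mathbb Q$-rank equals $\mathbb R$-rank for rational matrices. Conditions (1) and (3) are not needed for the lower bound. This also explains why the bound can be loose: the TU requirement (3) can force extra rows in $W$ beyond what condition (2) demands.
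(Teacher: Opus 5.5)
Your proof is correct and matches the paper's argument: the lower bound comes from condition (2), which gives $\operatorname{rowspan}[B^d;E^d]\subseteq\operatorname{rowspan}[W;E^d]$, combined with subadditivity of rank, and the upper bound comes from $W=\mathbf{I}$. The paper only asserts that $W=\mathbf I$ is a valid certificate, whereas you check it explicitly by taking $\widetilde P^d=0$, $U_P=P^d$, $U_B=B^d$ and showing that $[0;E^d;\mathbf I]$ is TU. That check quietly uses that $P^d$ is integral, which the definition's integrality requirement on $U_P$ tacitly demands anyway.
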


\vspace{-0.5em}
\subsection{Counterexample-Guided Symbol Invention with CPR}
\vspace{-0.5em}
Determining the minimum affine-TU dimension is NP-hard. We therefore construct a certificate $W$ using \emph{Counterexample-Guided Partition Refinement} (CPR), a CEGAR-style procedure summarized in \cref{alg:pwa_cpr}. CPR initializes $W_0$ from directions in $B^d$ not represented by $E^d$, guided by the rank lower bound in \cref{prop:rat_bounds}. Given the current $W_i$, it residualizes $P^d$ against $[W_i;E^d]$ to obtain $\tilde P_i^d$ and forms the verification matrix $M_i=[\tilde P_i^d;E^d;W_i]$ in \cref{def:rat_certificate}. The TU oracle \textsc{IsTU}$(M_i)$ then determines whether $W_i$ satisfies the relative affine-TU certificate; if so, CPR terminates.

When verification fails, the cause may lie not in which symbols $W_i$ encodes, but in how it is represented as a matrix. 
The \textsc{Sparsify} step rewrites $W_i$ with another $\{0,\pm1\}$ basis of the same row space.
If no such re-basing yields a TU matrix, the obstruction is real. 
The \textsc{Split} step then shrinks $M_i$: we delete rows and columns until obtaining a minimal non-TU submatrix $O_i$, in which removing any single column makes the matrix TU. 
CPR therefore promotes the variable $z_c$ of one such column to a new symbol ($k\rightarrow k+1$): once a value is assigned to $z_c$, its column drops out of the conditioned problem, so the remaining columns of $O_i$ form a TU matrix. The loop then repeats. We include more algorithm and implementation details in Appendix~\ref{app:pwa_cpr}.


\begin{figure}[t]
\centering
\begin{minipage}[t]{0.49\textwidth}
\begin{algorithm}[H]
\footnotesize
\caption{CPR Algorithm (PWA System)}\label{alg:pwa_cpr}
\textbf{Init:} PWA system $(E^{d},P^{d},B^{d})$; TU oracle \textsc{IsTU}\\
\textbf{Certificate:} Relative-affine TU $W$
\vspace{2pt}\hrule\vspace{4pt}
\begin{algorithmic}[1]
\STATE $W_0 \gets B^{d}\bmod\mathrm{rowspan}(E^{d})$\icmt{\cref{prop:rat_bounds}}\gap
\REPEAT\gap
\STATE \tikzmarknode{L3}{$\tilde P^{d}_i \gets P^{d}\bmod[W_i;E^{d}]$}\gap
\STATE \tikzmarknode{L4}{$M_i \gets [\,\tilde P^{d}_i;\;E^{d};\;W_i\,]$}\gap
\STATE \tikzmarknode{L5}{\textbf{if} \textsc{IsTU}$(M_i)$, \textbf{return} $W_i$}\gap
\STATE \tikzmarknode{L6}{re-base $W_i$ in $\mathrm{rowspan}(W_i)$}\icmt{\textsc{Sparsify}}\gap
\STATE \tikzmarknode{L7}{refresh $M_i$; \textbf{if} \textsc{IsTU}$(M_i)$, \textbf{return} $W_i$}\gap
\STATE \tikzmarknode{L8}{$O_i\gets$ minimal non-TU core of $M_i$}\gap
\STATE \tikzmarknode{L9}{$W^{c}\gets \textsc{Split}(W_i,O_i,c),\quad \forall c\in\mathrm{cols}(O_i)$}\gap
\STATE \tikzmarknode{L10}{$W_{i+1}\gets \textsc{Select}\!\left(\{W^{c}\}_{c};\,\textsc{IsTU}\right)$}\\
\icmt{New symbol invented, $k\!\to\!k{+}1$}\gap
\UNTIL{$k=n_{d}$}\gap
\STATE \textbf{Return} $W$
\end{algorithmic}
\end{algorithm}
\end{minipage}%
\hfill%
\begin{minipage}[t]{0.48\textwidth}
\begin{algorithm}[H]
\footnotesize
\caption{CPR Algorithm (NN System)}\label{alg:neural_cpr}
\textbf{Init:} Network system $h_s(\cdot)$; resolution $\tau$\\
\textbf{Certificate:} $\tau$-descent faithfulness $W:=(\mathcal{U},\mathcal{Q})$
\vspace{2pt}\hrule\vspace{4pt}
\begin{algorithmic}[1]
\STATE $(\mathcal{U}_0,\mathcal{Q}_0)\gets\emptyset$;\; $G_{s}\gets$ descent samples\icmt{$k=0$}\gap
\REPEAT\gap
\STATE \tikzmarknode{R3}{partition $G_s$ into subproblems by $(\mathcal{U}_i,\mathcal{Q}_i)$}\gap
\STATE \tikzmarknode{R4}{$\Phi_i\gets$ subproblems that are not $\tau$-faithful}\gap
\STATE \tikzmarknode{R5}{\textbf{if} $\Phi_i=\emptyset$, \textbf{return} $(\mathcal{U}_i,\mathcal{Q}_i)$}\gap
\STATE \tikzmarknode{R6}{$G_{s}\gets G_{s}\cup$ samples near $\Phi_i$}\gap
\STATE \tikzmarknode{R7}{recompute $\Phi_i$; \textbf{return} $(\mathcal{U}_i,\mathcal{Q}_i)$ \textbf{if} $\Phi_i=\emptyset$}\gap
\STATE \tikzmarknode{R8}{$\pi\gets$ lowest pass between two basins in $\Phi_i$}\gap
\STATE \tikzmarknode{R9}{$u\gets\nabla h_s$ flip across $\pi$;\; $q\gets$ cut at $u^{\top}\!x_{\pi}$}\gap
\STATE \tikzmarknode{R10}{$\mathcal{U}_{i+1}\!\gets\!\mathcal{U}_i\!\cup\!\{u\}$;\; $\mathcal{Q}_{i+1}\!\gets\!\mathcal{Q}_i\!\cup\!\{q\}$}\\
\icmt{New symbol invented, $k\!\to\!k{+}1$}\gap
\UNTIL{iteration limit reached}\gap
\STATE \textbf{Return} $W:=(\mathcal{U},\mathcal{Q})$
\end{algorithmic}
\end{algorithm}
\end{minipage}
\begin{tikzpicture}[remember picture, overlay,
  bx/.style = {draw=#1, densely dotted, line width=1pt, rounded corners=2pt,
               fill=none, inner xsep=1pt, inner ysep=0pt},
  bx/.default = black!55]
\node[bx=cVerify,   fit={(L3)(L4)(L5)([xshift=\algw]L3.west)}]   {};
\node[bx=cSparsify, fit={(L6)(L7)([xshift=\algw]L6.west)}]       {};
\node[bx=cSplit,    fit={(L8)(L9)(L10)([xshift=\algw]L8.west)}]  {};
\node[bx=cVerify,   fit={(R3)(R4)(R5)([xshift=\algw]R3.west)}]   {};
\node[bx=cSparsify, fit={(R6)(R7)([xshift=\algw]R6.west)}]       {};
\node[bx=cSplit,    fit={(R8)(R9)(R10)([xshift=\algw]R8.west)}]  {};
\end{tikzpicture}
\caption{\textbf{One CEGAR Skeleton, Two Instantiations.} Both algorithms share three stages: \swatch{cVerify}~\textsc{Verify}, \swatch{cSparsify}~\textsc{Sparsify} (\textit{eliminate false obstacles}), and \swatch{cSplit}~\textsc{Split} (\textit{pay one symbol for real obstacles}).}
\label{fig:cpr_pair}
\vspace{-10pt}
\end{figure}

\vspace{-0.5em}
\section{Solver Sufficiency in Network--Parameterized Systems}
\vspace{-0.5em}
The PWA analysis illustrates the central principle of solver-sufficient symbolization: identify discrete commitments that make the remaining optimization tractable. There, the discrete choices are explicit mode variables, and relative affine-TU certifies that fixing their symbolic summaries leaves an LP-solvable problem. We now consider decision making with neural objectives, where the same principle is useful but the characterization of difficulty is different, due to the black-box nature and large parameterization of modern networks.

\xhdr{Energy-based problem formulation.}
Let $s$ denote the current state or planning context, and let $\mathcal{X}=[\underline{x},\overline{x}] \subset\mathbb{R}^{n}$ be a bounded decision space. We represent decision-making as minimizing a state-conditioned neural energy function:
\begin{equation}
\mathcal{P}_{\mathrm{NN}}:\quad
\min\nolimits_{x\in\mathcal{X}} h_s(x).
\label{eq:neural_problem}
\end{equation}
Here, energy denotes a scalar measure of the decision quality, with lower values indicating more desirable decisions; it need not represent physical energy. For single-step control, $x=a$ and one may use the Q function $h_s(a)=-Q(s,a)$. For long-horizon planning, $x=a_{1:H}$ and $h_s$ can evaluate the trajectory through a learned trajectory evaluator or a world model. Unlike the conditioned LPs in the PWA setting, this energy landscape remains highly nonconvex. As illustrated on the left of \cref{fig:overview_nn} (a), doing gradient descent or Langevin dynamics from different initializations may lead to different local minima. Our goal here is to characterize {\it regions} in such optimization landscapes.

\begin{remark}
     Our extension removes the discrete variables $z$ in the base optimization formulation. In PWA, the raw model is affine within each mode, and the complexity of the global landscape comes from $z$. When generalizing this to neural network--parameterized landscapes, we can represent such a complex landscape directly over $x$, without declaring mode variables, as illustrated in \cref{fig:overview_nn} (a).
\end{remark}

\begin{figure*}
    \centering
    \includegraphics[width=\linewidth]{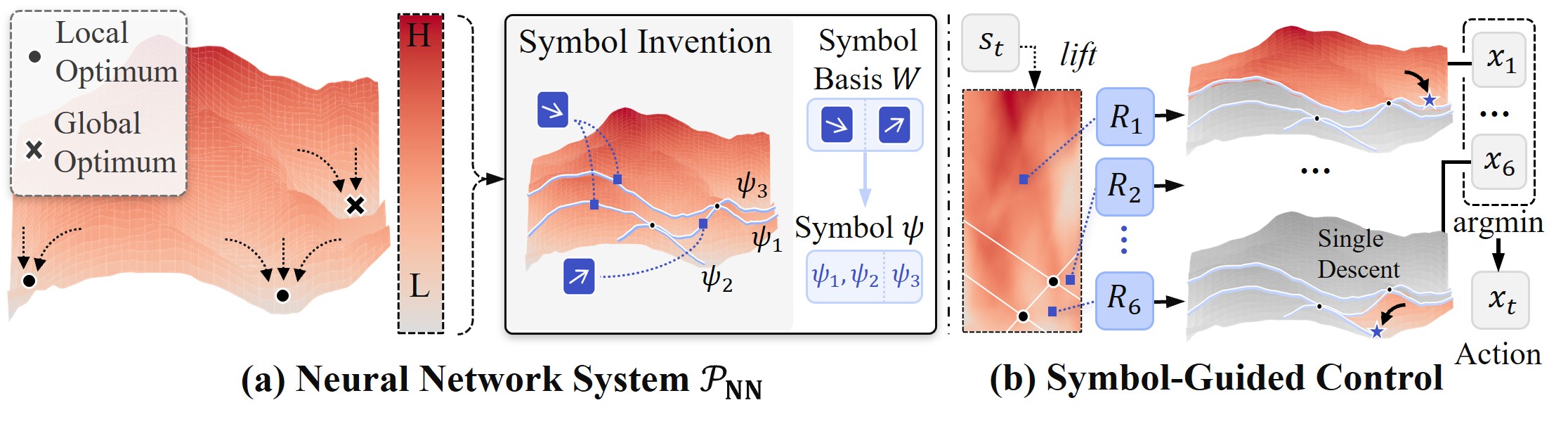}
    \vspace{-20pt}
    \caption{\textbf{Solver-Sufficient Symbolization for NN Systems.} $\tau$-descent faithfulness partitions a multimodal state-conditioned objective into regions. A symbol-guided control algorithm performs one confined descent per region and selects the lowest-value candidate across all regions.}
    \label{fig:overview_nn}
    \vspace{-15pt}
\end{figure*}


\vspace{-0.5em}
\subsection{$\mathbf{\tau}$-Descent Faithful Symbolization in Neural Objectives}
\vspace{-0.5em}
In the PWA setting, the tractability of the conditioned problem follows from an algebraic property of the constraint matrix. For neural objectives,  conditioned solvability is instead characterized by the geometry that a local solver must navigate. In particular, a region may contain several local minima yet remain manageable if the solver can escape the barriers separating them. We model this capability through an escape budget $\tau>0$, which bounds the temporary increase in energy above the lowest value reached so far. A path satisfying this budget is defined as follows.

\begin{definition}[$\tau$-Admissible Path]
\label{def:tau_admissible_path}
Let $R\subseteq\mathcal{X}$ and $x\in R$.
A continuous path $\gamma:[0,1]\to R$ with $\gamma(0)=x$
is \emph{$\tau$-admissible} for $h_s$ if
\[
h_s(\gamma(t))
\le
\min_{0\le r\le t} h_s(\gamma(r))+\tau,
\qquad \forall t\in[0,1].
\]
We denote by $\Gamma^\tau_{s,R}(x)$ the set of all such
paths starting from $x$.
\end{definition}

We then define a region to be easily \textit{solvable} if this allowance suffices to reach a
near-optimal point from every initialization in a region.
Assume that $h_s$ is continuous on $\mathcal{X}$.

\begin{definition}[$\tau$-Descent Faithfulness]
\label{def:tau_univalence}
For a nonempty region $R\subseteq\mathcal{X}$ and $x\in R$,
define the best reachable endpoint value
\[
v^\tau_{s,R}(x)
:=
\inf_{\gamma\in\Gamma^\tau_{s,R}(x)}
h_s(\gamma(1)).
\]
The region $R$ is \emph{$\tau$-descent faithful} with
respect to $h_s$ if
\[
\sup_{x\in R} v^\tau_{s,R}(x)
\le
\inf_{y\in R} h_s(y)+\tau.
\]
\end{definition}


The condition does not require convexity or a unique local minimum. It certifies accessibility under the specified path model. This provides an analogy to the relative affine-TU certificate. In both settings, a symbol induces conditioned regional problems under a hardness measurement. For PWA systems, that guarantee is the exact optimizability through an LP relaxation; for neural objectives, it is the near-optimal accessibility within an escape budget.

Given the definition, we next seek symbols that induce such conditioned regions. As illustrated in \cref{fig:overview_nn} (a), 
the symbol invention problem is to choose projection directions and quantizers that organize
the decision space into regions that are $\tau$-descent faithful, with as few directions and
thresholds as possible.
Formally, we seek a fixed symbolic map
\begin{equation}
W(x)
=
\left(
q_1(u_1^\top x),\ldots,q_k(u_k^\top x)
\right),
\label{eq:neural_symbol_map}
\end{equation}
where $u_1,\ldots,u_k\in\mathcal{U}$ are projection directions and each $q_j\in\mathcal{Q}$ is a finite-valued quantizer. Each symbolic assignment $\psi$ defines a region $R_\psi:=\{x\in\mathcal{X}:W(x)=\psi\}$.

During execution, the current state $s_t$ instantiates $h_{s_t}$ and the corresponding quantizers in $W=(\mathcal{U},\mathcal{Q})$, whose assignments partition $\mathcal{X}$ into conditioned regions. The controller then performs single confined descent in each region under the $\tau$-admissible solver model and outputs the action sequence attaining the lowest objective value. By \cref{prop:tau_guarantee}, the value is within $\tau$ of the global optimum. The procedure is illustrated in \cref{fig:overview_nn} (b). 

\begin{figure*}[t]
    \centering
    \includegraphics[width=0.96\linewidth]{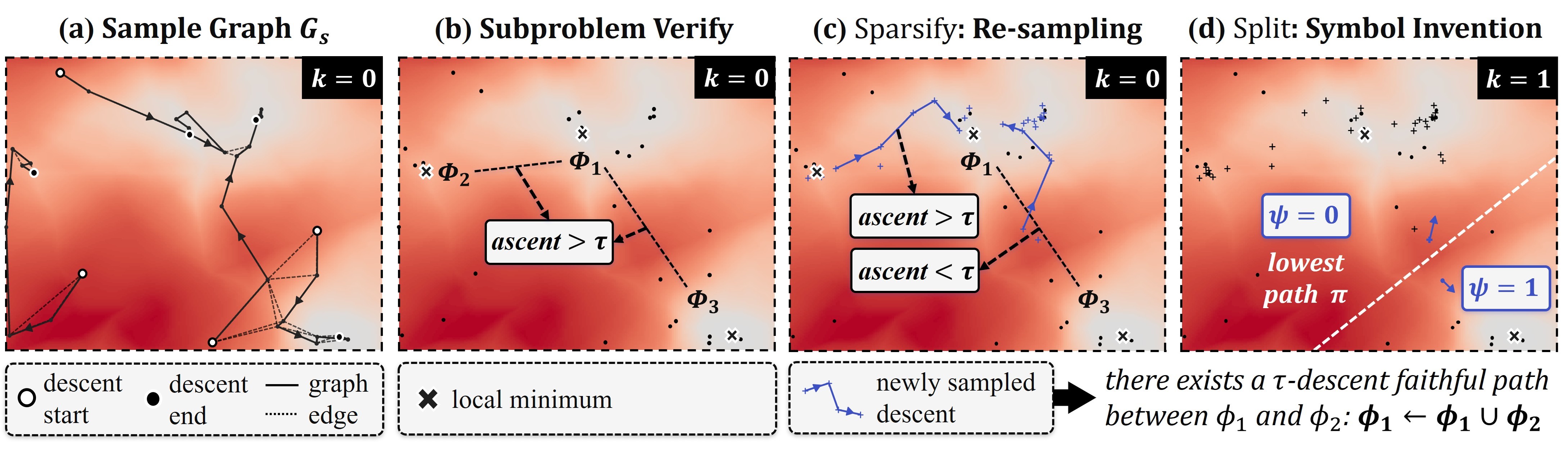}
    \vspace{-8pt}
    \caption{\textbf{Exemplar Symbol Invention in One Round.} Take Lunar Lander as the exemplar task.}
    \label{fig:example}
    \vspace{-10pt}
\end{figure*}

\vspace{-0.5em}
\subsection{Counterexample-Guided Symbol Invention with Neural CPR}
\vspace{-0.5em}
Neural CPR inherits the CEGAR style. Since the network parameters cannot be characterized explicitly, CPR constructs symbols from sampling.  Given a state set $\mathcal S$ (e.g., from demonstration trajectories), we run multi-start descent to collect outputs, objective values, and gradients.
For each state $s \in \mathcal S$, an $m$-nearest-neighbor graph $G_s$ connects these samples, with edges storing the objective values at the midpoint and both ends. Next, neural CPR discovers a symbolic partition through sampling-based, counterexample-guided refinement (see \cref{alg:neural_cpr}; detailed version in Appendix~\ref{app:cpr_nn}). 
Verification is exact on the sample graph $G_s$, not on the continuous region.

\cref{fig:example} gives a concrete example of a refinement loop on state $s$. \textbf{(a)} Random restarts of descent settle in different basins; we link descent nodes by $m$-nearest neighbours into a graph $G_s$.  
\textbf{(b)} Verification partitions $G_s$ into basins that are each internally $\tau$-descent faithful, i.e., $\Phi_1,\Phi_2,\Phi_3$. 
\textbf{(c)} \textsc{Sparsify}-step: such a separation may itself be an artefact of under-sampling: re-sampling descents between two basins can recover a $\tau$-admissible path connecting them, merging $\phi_1$ and $\phi_2$ into a single basin at no symbolic cost. 
\textbf{(d)} \textsc{Split}-steps: basins that remain separated lead to new symbols. We first estimate a lowest-barrier path $\pi$ from the trapped basin toward a lower-energy basin and locate a transition point $x_\pi$ where the gradient directions ``\textit{flip}'' within its neighborhood. The gradient directions provide a heuristic separating direction $u_{k+1}$. Its projection $b_{k+1}=u_{k+1}^{\top}x_\pi$ defines a candidate quantization threshold. CPR then augments the symbolic map with this new symbol. Steps b-d will be repeated until no violations are found. Details are deferred to Appendix~\ref{app:cpr_nn}.

\begin{figure*}[t]
    \centering
    \includegraphics[width=\linewidth]{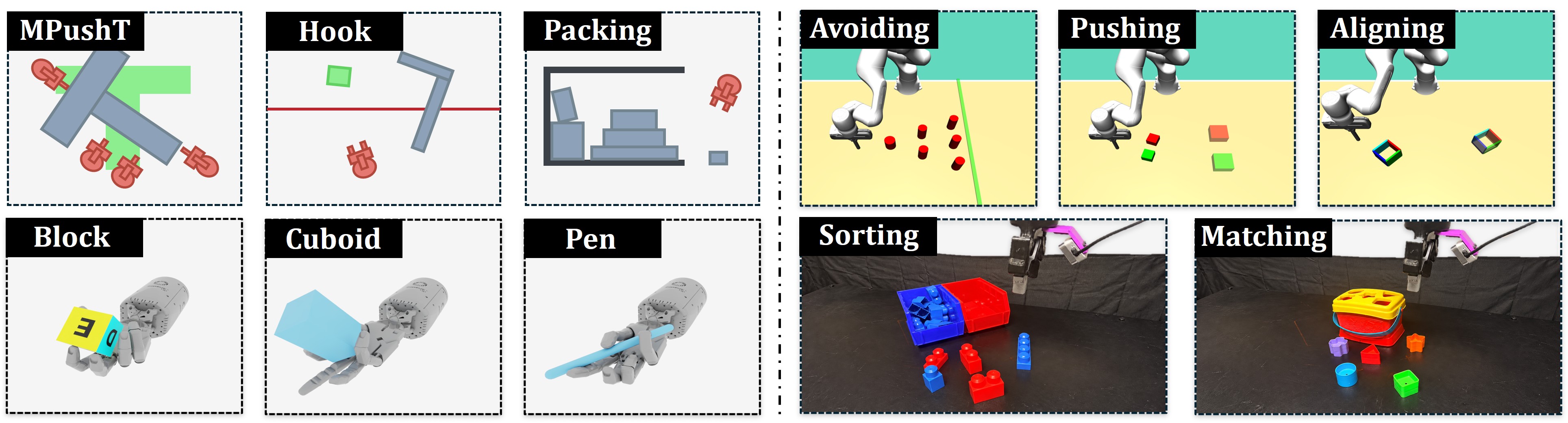}
    \vspace{-18pt}
    \caption{\textbf{Task Suites.} Dashed line divides the tasks for PWA systems and for network systems.}
    \label{fig:suites}
    \vspace{-8pt}
\end{figure*}

\vspace{-0.5em}
\section{Experiments}
\vspace{-0.5em}
In this section, we present extensive experimental results to answer the following questions: \\
\textbf{Q1.} Can solver-sufficient symbols preserve optimality while reducing solving cost? \\
\textbf{Q2.} Can solver-sufficient symbols explain why certain human-defined symbols are useful? \\
\textbf{Q3.} Can solver-sufficient symbols retain the interpretability and steerability of symbolic planning? 
\begin{table}[t]
\centering
\caption{\textbf{Solver Sufficiency on PWA Systems.}
For inference-time speedup \textbf{IS}, E2E is $1.00\times$.}
\label{tab:pwa_main}
\vspace{-5pt}
\renewcommand{\arraystretch}{1.08}
\resizebox{\linewidth}{!}{%
\begin{tabular}{
l
@{\hspace{2.0mm}} c >{\columncolor{gray!10}}c
@{\hspace{2.0mm}} c >{\columncolor{gray!10}}c
@{\hspace{2.0mm}} c >{\columncolor{gray!10}}c
@{\hspace{2.8mm}} c >{\columncolor{gray!10}}c
@{\hspace{2.0mm}} c >{\columncolor{gray!10}}c
@{\hspace{2.0mm}} c >{\columncolor{gray!10}}c
}
\toprule
\multirow{4}{*}{\textbf{Method}}
& \multicolumn{6}{c}{\textbf{2D Manipulation}}
& \multicolumn{6}{c}{\textbf{In-Hand Reorientation}} \\
\cmidrule(lr){2-7}\cmidrule(lr){8-13}
& \multicolumn{2}{c}{\texttt{MPushT}}
& \multicolumn{2}{c}{\texttt{Hook}}
& \multicolumn{2}{c}{\texttt{Shelf}}
& \multicolumn{2}{c}{\texttt{Block}}
& \multicolumn{2}{c}{\texttt{Cuboid}}
& \multicolumn{2}{c}{\texttt{Pen}} \\
\cmidrule(lr){2-3}\cmidrule(lr){4-5}\cmidrule(lr){6-7}
\cmidrule(lr){8-9}\cmidrule(lr){10-11}\cmidrule(lr){12-13}
& \textbf{SR}$\uparrow$ & \textbf{IS}$\uparrow$
& \textbf{SR}$\uparrow$ & \textbf{IS}$\uparrow$
& \textbf{SR}$\uparrow$ & \textbf{IS}$\uparrow$
& \textbf{SR}$\uparrow$ & \textbf{IS}$\uparrow$
& \textbf{SR}$\uparrow$ & \textbf{IS}$\uparrow$
& \textbf{SR}$\uparrow$ & \textbf{IS}$\uparrow$ \\
\midrule

\multicolumn{13}{l}{\cellcolor{gray!25}\textit{- Solver-Oriented Methods}} \\

CoCo
& $0.75$ & $3.57\times$
& $0.63$ & $\mathbf{3.18\times}$
& $0.75$ & $1.39\times$
& $0.69$ & $\mathbf{2.70\times}$
& $0.81$ & $\mathbf{1.61\times}$
& $0.75$ & $1.28\times$ \\

GBD-MPC
& $0.94$ & $0.12\times$
& $0.94$ & $0.06\times$
& $0.81$ & $0.09\times$
& $\mathbf{0.94}$ & $0.32\times$
& $0.88$ & $0.21\times$
& $\mathbf{0.88}$ & $0.06\times$ \\

\multicolumn{13}{l}{\cellcolor{gray!25}\textit{- Symbol-Oriented Methods}} \\

PDDLStream
& $0.88$ & $0.77\times$
& $0.81$ & $0.36\times$
& $0.69$ & $0.42\times$
& $0.81$ & $0.67\times$
& $0.81$ & $0.38\times$
& $0.75$ & $0.17\times$ \\

PGS
& $0.81$ & $0.08\times$
& $0.69$ & $0.37\times$
& $0.75$ & $0.16\times$
& $0.75$ & $0.15\times$
& $0.69$ & $0.13\times$
& $0.81$ & $0.04\times$ \\

IVNTR
& $0.88$ & $0.15\times$
& $0.81$ & $0.22\times$
& $0.75$ & $0.12\times$
& $\mathbf{0.94}$ & $0.10\times$
& $0.75$ & $0.10\times$
& $0.81$ & $0.03\times$ \\

pix2pred
& $0.88$ & $0.03\times$
& $0.88$ & $0.09\times$
& $\mathbf{0.94}$ & $0.08\times$
& $\mathbf{0.94}$ & $0.05\times$
& $0.88$ & $0.04\times$
& $0.75$ & $0.01\times$ \\

\midrule
\textbf{Ours}
& $\mathbf{1.00}$ & $\mathbf{4.81\times}$
& $\mathbf{1.00}$ & $3.07\times$
& $\mathbf{0.94}$ & $\mathbf{2.77\times}$
& $0.88$ & $1.53\times$
& $\mathbf{0.94}$ & $1.34\times$
& $\mathbf{0.88}$ & $\mathbf{1.94\times}$ \\

\bottomrule
\vspace{-20pt}
\end{tabular}%
}
\end{table}

\textbf{Experiment Setup.}
We evaluate our method extensively in both PWA (three 2D manipulation tasks and three in-hand reorientation tasks with multi-finger grippers) and network settings (three D3IL tasks to study symbolic structure discovery in multimodal decision landscapes \citep{jia2024towards}, and two real-world tasks). Our task suites are shown in \cref{fig:suites}. The details can be found in Appendix~\ref{app:details}.

\begin{wrapfigure}{r}{0.59\textwidth}
    \vspace{-4mm}
    \centering
    \includegraphics[width=\linewidth]{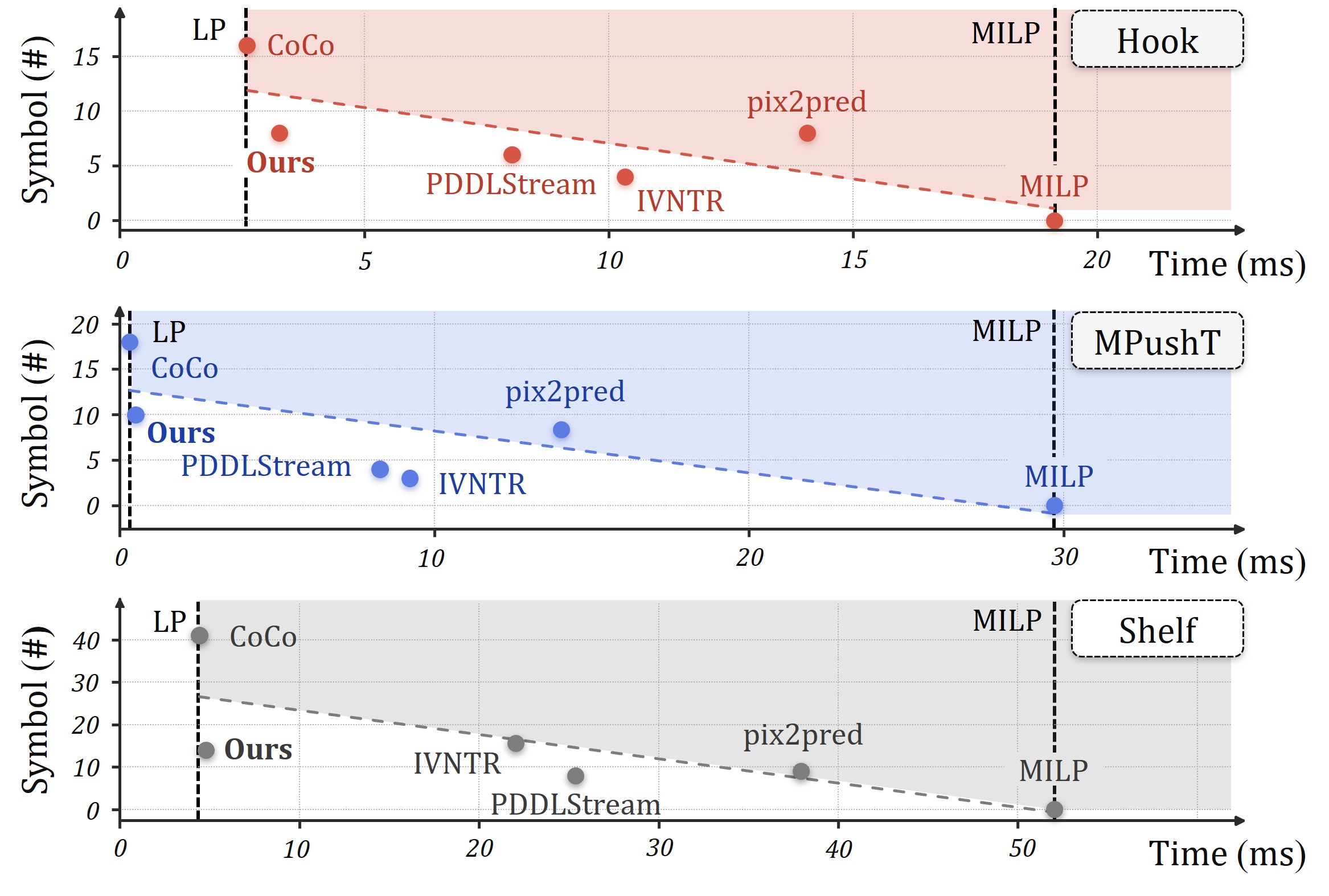}
    \vspace{-20pt}
    \caption{\textbf{Symbol Count vs.\ Conditioned Solving Time.} The dashed line shows the fitted trend across methods.}
    \label{fig:symbol_time}
    \vspace{-7pt}
\end{wrapfigure}
\textbf{Baselines.}
For PWA systems, we compare against solver-specific methods CoCo and GBD-MPC, which respectively predict discrete modes before applying convex optimization and refine the original problem through Benders cuts \citep{cauligi2020learning,lin2024accelerating} to accelerate solving. We also compare against existing symbol invention methods: PDDLStream with LLM-invented symbols, PGS, IVNTR, and pix2pred, which construct symbolic abstractions before grounded solving \citep{garrett2020pddlstream,silver2023predicate,li2025bilevel,athalye2026pixels}. The first accelerates the same MILP without symbols, and the second uses symbols not derived from the solver. 
For neural network--based systems, we evaluate multimodal policies BESO, IBC, DDPM, ACT, and BeT under their energy-based variants \citep{reuss2023goal,florence2022implicit,chi2025diffusion,zhao2023learning,shafiullah2022behavior}. 

\vspace{-1em}
\subsection{\textbf{Q1.} Preserving Optimality while Reducing Solving Cost}
\vspace{-0.5em}
We first evaluate solver sufficiency in the PWA setting using success rate (\textbf{SR}) and per-step inference-time speedup (\textbf{IS}). As shown in \cref{tab:pwa_main}, our symbols largely preserve solution quality while reducing solving time. CoCo obtains efficient conditioned solves by fixing complete discrete assignments to construct an LP, but may exclude the optimum; in contrast, our method exposes only the commitments required for faithful relaxation, leaving the residual problem LP-solvable with a lower-dimensional symbolic search (\cref{fig:symbol_time}). The remaining failures in our algorithm are mainly due to early commitments that invalidate successive transitions, caused by a too-short planning horizon.

For neural settings, we construct $\tau$-descent-faithful symbols on top of a trained DDPM policy without modifying its parameters. We report \textbf{SR} and descents-to-coverage (\textbf{DC}$_\delta$), the number of descents required to reach the $\tau$-neighborhood of the global optimum with probability at least $1-\delta$. As shown in \cref{tab:nn_main}, symbol-guided solving maintains high task success while substantially reducing \textbf{DC}$_{10^{-3}}$, indicating that near-optimal solutions can be recovered with only a few conditioned descents.

\cref{tab:symbol_semantics} further illustrates that our CPR-invented symbols encode
action-relevant commitments, whereas baseline symbols mainly describe state or geometry. Overall, our algorithms and results provide a solver-centric
interpretation of usefulness.

\begin{table}[t]
\centering\small
\caption{\textbf{Solver Sufficiency on Network Systems.}
For \textbf{DC}$_\delta$, we take $\delta=0.001$.}
\label{tab:nn_main}
\vspace{-5pt}
\renewcommand{\arraystretch}{1.08}

\resizebox{\linewidth}{!}{%
\begin{tabular}{
l
@{\hspace{2.0mm}} c >{\columncolor{gray!10}}c
@{\hspace{2.8mm}} c >{\columncolor{gray!10}}c
@{\hspace{2.8mm}} c >{\columncolor{gray!10}}c
@{\hspace{1.8mm}\vrule width 0.7pt\hspace{1.8mm}}
c >{\columncolor{gray!10}}c
@{\hspace{2.8mm}} c >{\columncolor{gray!10}}c
}
\toprule

\multirow{2}{*}{\textbf{Method}}
& \multicolumn{2}{c}{\texttt{Avoiding}}
& \multicolumn{2}{c}{\texttt{Pushing}}
& \multicolumn{2}{c}{\texttt{Aligning}}
& \multicolumn{2}{c}{\texttt{Sorting}}
& \multicolumn{2}{c}{\texttt{Matching}} \\
\cmidrule(lr){2-3}
\cmidrule(lr){4-5}
\cmidrule(lr){6-7}
\cmidrule(lr){8-9}
\cmidrule(lr){10-11}

& \textbf{SR}$\uparrow$ & \textbf{DC}$_\delta\downarrow$
& \textbf{SR}$\uparrow$ & \textbf{DC}$_\delta\downarrow$
& \textbf{SR}$\uparrow$ & \textbf{DC}$_\delta\downarrow$
& \textbf{SR}$\uparrow$ & \textbf{DC}$_\delta\downarrow$
& \textbf{SR}$\uparrow$ & \textbf{DC}$_\delta\downarrow$ \\
\midrule

DDPM
& $\mathbf{1.00}$ & $7.03$
& $0.90$ & $\mathbf{1.00}$
& $\mathbf{0.94}$ & $10.31$
& $\mathbf{0.75}$ & $5.15$
& $\mathbf{0.88}$ & $1.68$ \\

ACT
& $0.91$ & $833.69$
& $0.91$ & $5.93$
& $0.91$ & $9.25$
& $0.69$ & $173.45$
& $0.63$ & $259.67$ \\

BESO
& $0.97$ & $329.03$
& $0.63$ & $198.57$
& $0.75$ & $338.03$
& $\mathbf{0.75}$ & $87.48$
& $0.69$ & $147.85$ \\

IBC
& $0.81$ & $690.34$
& $0.63$ & $312.70$
& $0.63$ & $349.94$
& $0.63$ & $340.78$
& $0.69$ & $562.95$ \\

BeT
& $0.72$ & $495.50$
& $0.30$ & $105.63$
& $0.53$ & $94.66$
& $0.56$ & $936.83$
& $0.75$ & $1325.94$ \\

\midrule

\textbf{DDPM (Symbols)}
& $\mathbf{1.00}$ & $\mathbf{2.00}$
& $\mathbf{0.91}$ & $\mathbf{1.00}$
& $0.84$ & $\mathbf{2.41}$
& $\mathbf{0.75}$ & $\mathbf{1.20}$
& $\mathbf{0.88}$ & $\mathbf{1.00}$ \\

\bottomrule
\end{tabular}%
}

\vspace{-1em}
\end{table}

\vspace{-0.75em}
\subsection{\textbf{Q2.} Explaining the Utility of Human-Defined Symbols}
\vspace{-0.5em}
\begin{wrapfigure}{r}{0.47\textwidth}
    \vspace{-4mm}
    \centering
    \includegraphics[width=\linewidth]{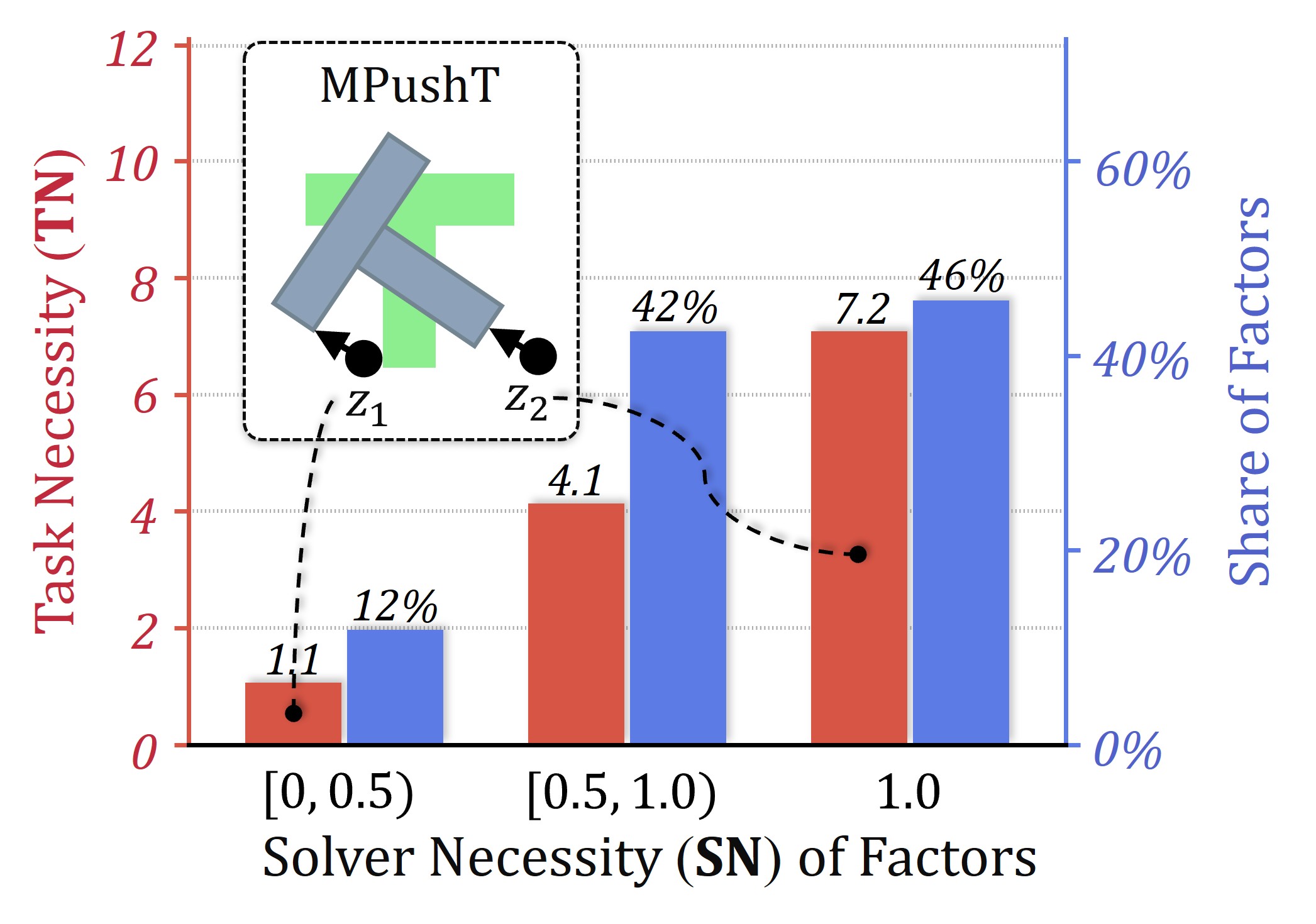}
    \vspace{-20pt}
    \caption{\textbf{Human-Defined Concept Analysis.}}
    \label{fig:human_symbol}
    \vspace{-10pt}
\end{wrapfigure}

\begin{figure*}[tp]
    \vspace{0em}
    \centering\small
    \includegraphics[width=\linewidth]{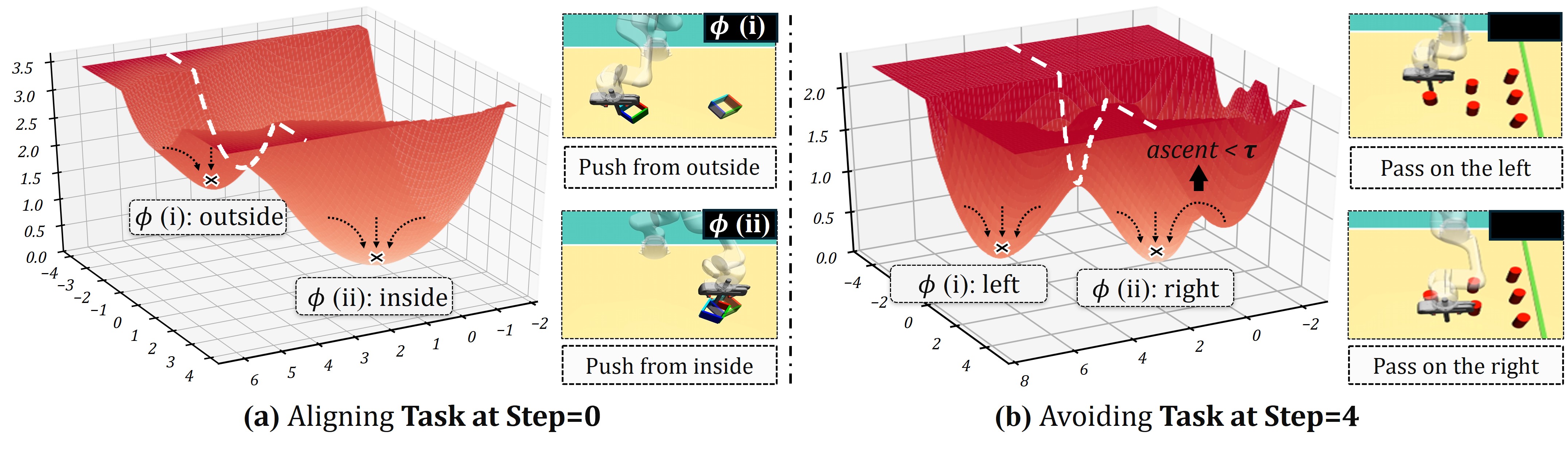}
    \vspace{-20pt}
    \caption{\textbf{Behavior Steering with Invented Symbols.} Different symbol values correspond to distinct basin-level behaviors, allowing steering the controller toward preferences.}
    \label{fig:steering}
    \vspace{-1.5em}
\end{figure*}

To investigate why certain human-defined symbols are computationally useful, we study \texttt{MPushT}, a PushT variant in which four pushers simultaneously move a T-shaped block to a target pose. We manually construct nine MILP formulations containing 127 factors in total (Appendix~\ref{app:mpusht_variation}). We compute a CPR symbolization for each formulation and evaluate each human-defined binary factor $z_i$ along two dimensions. Task necessity (\textbf{TN}) measures the decrease in task success when the factor is disabled. Solver necessity (\textbf{SN}) measures how often the factor is covered by the CPR-invented symbols, defined as the fraction of symbol assignments $\psi' \in \Psi$ for which conditioning on $Wz=\psi'$ uniquely determines $z_i$. 
Since CPR identifies the underlying combinatorial structure that must be preserved under relaxation, a high \textbf{SN} indicates that a factor encodes a discrete distinction that cannot simply be relaxed away by the MILP solver. By contrast, a discrete variable $z_i$ that cannot be relaxed before the CPR symbols are fixed but can be relaxed afterwards has low \textbf{SN}, indicating that it contributes no intrinsic combinatorial hardness. As shown in \cref{fig:human_symbol}, both \textbf{TN} and the fraction of factors with clear human semantics increase with \textbf{SN}, suggesting factors that cannot be simply relaxed are more important for task success. 
For example, $z_2$, a push at the bottom of the stem, has higher
\textbf{SN} and \textbf{TN} than $z_1$, a push at a corner of the top bar. The decision of $z_2$ matters in more task instances, as the contact force of $z_1$ can be replaced by neighboring pushes.

\begin{table}[t]
\centering\small
\caption{\textbf{Semantics of CPR-Invented Symbols.} We only show two invented symbols per setting.}
\label{tab:symbol_semantics}
\vspace{-5pt}

\renewcommand{\arraystretch}{1.08}
\resizebox{\linewidth}{!}{%
\begin{tabular}{llll}
\toprule
\cellcolor{gray!15}\textbf{Task} & \textbf{Ours} & \textbf{PGS} & \textbf{pix2pred} \\
\midrule

\cellcolor{gray!15}\texttt{MPushT}
&
\makecell[l]{
$\displaystyle \sum_{\tau_c<0} z_c-\sum_{\tau_c>0} z_c$
\icmt{is clockwise push}
\\[3pt]
$z_{\mathrm{top}}-z_{\mathrm{bottom}}$
\icmt{is push above}
}
&
\makecell[l]{
$\xi_\theta\leq -0.9$
\icmt{T orientation $\leq$ threshold}
\\[3pt]
$\xi_{\mathrm{dist}}\leq 1.0$
\icmt{near the goal}
}
&
\makecell[l]{
\texttt{stem\_pointing\_down\_right}
\\[3pt]
\texttt{rotation\_error\_small}
}
\\
\midrule

\cellcolor{gray!15}\texttt{Block}
&
\makecell[l]{
$z_{\rm s\rightarrow ff}$
\icmt{index finger contacts object}
\\[3pt]
$z_{\rm th\rightarrow palm}$
\icmt{thumb contacts palm}
}
&
\makecell[l]{
$\xi_{\rm ff\rightarrow thu}>0.5$
\icmt{index finger near thumb}
\\[3pt]
$\xi_{\rm 11}\leq 0.6$
\icmt{control variable $\leq$ threshold}
}
&
\makecell[l]{
\texttt{orientation\_over\_bound}
\\[3pt]
\texttt{orientation\_southeast}
}
\\

\bottomrule
\vspace{-20pt}
\end{tabular}%
}
\end{table}
\begin{table*}[t]
\centering

\begin{minipage}[t]{0.485\textwidth}
\vspace{0pt}
\centering

\captionof{table}{\textbf{Behavior Steering with Symbols.}
See \cref{fig:steering} for preference definitions.}
\label{tab3}
\vspace{-5pt}

\renewcommand{\arraystretch}{1.08}
\setlength{\tabcolsep}{1.1mm}

\resizebox{\linewidth}{!}{%
\begin{tabular}{
l
@{\hspace{1.5mm}} cc cc
@{\hspace{2.0mm}} cc cc
}
\toprule
\multirow{4}{*}{\textbf{Method}}
& \multicolumn{4}{c}{\texttt{Avoiding}}
& \multicolumn{4}{c}{\texttt{Aligning}} \\
\cmidrule(lr){2-5}\cmidrule(lr){6-9}

& \multicolumn{2}{c}{$\phi$ (i)}
& \multicolumn{2}{c}{$\phi$ (ii)}
& \multicolumn{2}{c}{$\phi$ (i)}
& \multicolumn{2}{c}{$\phi$ (ii)} \\
\cmidrule(lr){2-3}\cmidrule(lr){4-5}
\cmidrule(lr){6-7}\cmidrule(lr){8-9}

& \textbf{SR} & \textbf{PR}
& \textbf{SR} & \textbf{PR}
& \textbf{SR} & \textbf{PR}
& \textbf{SR} & \textbf{PR} \\
\midrule

\textbf{Ours}
& $1.00$ & $0.94$
& $1.00$ & $1.00$
& $0.75$ & $0.75$
& $0.83$ & $0.81$ \\

\bottomrule
\end{tabular}%
}

\end{minipage}
\hfill
\begin{minipage}[t]{0.46\textwidth}
\vspace{0pt}
\centering\small
\captionof{table}{\textbf{Invented Symbol Complexity Scale with Planning Horizon.}}
\label{tab:horizon_scaling}
\vspace{-5pt}

\renewcommand{\arraystretch}{1.08}
\setlength{\tabcolsep}{1.1mm}

\resizebox{\linewidth}{!}{%
\begin{tabular}{lcccccccc}
\toprule
& \multicolumn{2}{c}{$H=4$}
& \multicolumn{2}{c}{$H=8$}
& \multicolumn{2}{c}{$H=12$}
& \multicolumn{2}{c}{$H=16$} \\
\cmidrule(lr){2-3}
\cmidrule(lr){4-5}
\cmidrule(lr){6-7}
\cmidrule(lr){8-9}

\textbf{Task}
& $k$ & \textbf{SR}
& $k$ & \textbf{SR}
& $k$ & \textbf{SR}
& $k$ & \textbf{SR} \\
\midrule

\texttt{Avoiding}
& 0 & 1.00
& 1 & 1.00
& 1 & 0.91
& 2 & 0.94 \\

\texttt{Aligning}
& 4 & 0.84
& 5 & 0.75
& 6 & 0.69
& 8 & 0.63 \\

\bottomrule
\end{tabular}%
}

\end{minipage}
\vspace{-1em}
\end{table*}

\vspace{-0.5em}
\subsection{\textbf{Q3.} Retaining Symbolic Interpretability and Steerability}
\vspace{-0.5em}

The invented symbols also enable steering at the level of motion preferences. As shown in \cref{fig:steering}, different manipulation strategies will emerge in policies trained for \texttt{Aligning} (pushing from inside versus outside), and \texttt{Avoiding} (passing an obstacle on the left versus right). These alternative strategies correspond to distinct basins of the underlying energy landscape $h_s$, which CPR will discover as separate symbol regions. Because the symbol map is defined over $x$ (\cref{eq:neural_symbol_map}), fixing a symbol value $\psi$ restricts the control command $x$ to the corresponding region $\Phi^{-1}(\psi)$. To make these regions interpretable, we use a vision-language model to label rollout videos from different symbol regions and assign each region a behavioral label. For example, the left basin in \cref{fig:steering}(a) is labeled ``pushing from outside.'' At inference time, we can manually set the symbol values associated with the desired preference. As shown in \cref{tab3}, this yields both high task success rates (\textbf{SR}) and high preference success rates (\textbf{PR}: the fraction of successful trials that also satisfy the preference).

\xhdr{Discussion: horizon-induced symbol complexity.}
As the planning horizon grows, the decision landscape becomes more complex through accumulated long-horizon state-action dependencies, and the required symbol dimension $k$ generally increases (\cref{tab:horizon_scaling}). The growth may plateau once dominant modes are exposed. Yet the resulting increase in basin complexity also makes optimization harder, leading to a moderate decline in \textbf{SR}. Tasks with clearer decision modes scale more favorably: for example, \texttt{Avoiding}, which mainly requires choosing a passing direction around obstacles, needs far fewer symbols than \texttt{Aligning}.

\xhdr{Limitations.}
So far, our formulation uses linear symbol models $u^\top x$, which may be restrictive in some cases. Yet, if task-relevant modes are sufficiently abstracted and linearly separable in a learned feature space, $u^\top x$ can still partition the landscape effectively; otherwise, nonlinear maps $u(x)$ may be preferable. We include more discussions and ablation studies in Appendix~\ref{app:further}.

\vspace{-1em}
\section{Conclusion}
\vspace{-0.5em}

We presented an inference-algorithm--centric view of symbol invention: useful symbols expose discrete commitments that simplify planning. We formalized this principle for piecewise-affine systems and extended it to neural energy landscapes. Across simulated and real-robot tasks, the invented symbols reduce solving cost while preserving solution quality and provide an interpretable interface for behavior steering. These results connect symbolic abstraction to computational utility, offering a basis for both inventing new symbols and understanding the usefulness of human-defined concepts.

\newpage

\bibliography{iclr2027_conference}
\bibliographystyle{iclr2027_conference}

\newpage
\appendix

\startcontents[appendix]

\section*{Appendix Contents}
{
\setcounter{tocdepth}{2}
\printcontents[appendix]{}{1}{}
}

\section{Related Works}\label{app:related_works}
\subsection{Search-then-Optimize: From MILP to Energy-Based Models}
\label{app:rw_partition}
Many planning methods share a two-stage shape: commit to a discrete quantity, then solve a conditioned problem that is tractable once it is fixed. Optimization formalizes this as the partitioned optimization framework, where minimizing over partition indices recovers solutions of the original problem \citep{audet2026partitioned}. For piecewise-affine models written as MILPs, the index is the mode assignment and each conditioned problem is convex; methods differ in how that assignment is obtained, by warm-starting branch-and-bound \citep{marcucci2020warm}, by predicting it so that a convex program remains online \citep{cauligi2021coco}, or by transferring Benders cuts across instances \citep{lin2024accelerating}. In each, the discrete structure is declared by the formulation and the effort goes into searching it faster; which partition to use, and whether the conditioned problem is tractable under it, remain open \citep{audet2026partitioned}.

When the objective is a learned network, the shape persists but no partition is declared. Energy-based and diffusion policies represent alternative valid behaviors as distinct basins of the learned objective \citep{florence2022implicit,chi2025diffusion}, and which basin a run settles in is decided by its initialization. This is how multimodality is represented, not a defect, but it makes the basin sampled rather than decided. Never represented as a variable, it cannot be committed to or held fixed across successive solves. Adjacent action chunks, drawn from independent noise, may therefore select incompatible modes \citep{black2026real}.

Existing remedies act on the generated actions rather than on the basin: averaging overlapping predictions \citep{zhao2023learning}, selecting among candidates for consistency with previous decisions \citep{liu2025bidirectional}, or correcting the denoising process by inpainting \citep{black2026real}; each enforces agreement between consecutive outputs without making the basin explicit. We instead supply an explicit index together with what fixing it buys, at two strengths: for PWA systems the LP relaxation of each conditioned problem is exact (\cref{thm:rat_complexity}), whereas for learned objectives one confined descent per region reaches within $\tau$ of the global optimum regardless of its start (\cref{prop:tau_guarantee}), certified from sampled descents and therefore one-sided over the continuous domain.

\subsection{Symbol Invention in Neuro-Symbolic Planning}
\label{app:rw_symbol}
Symbol invention involves two questions: where candidates come from, and which to keep; existing methods differ mainly in the first. Symbols have been grounded in the preconditions and effects of given skills \citep{konidaris2018skills}, taken as the discrete code of an autoencoder \citep{asai2018classical}, tied to a language vocabulary \citep{mao2019neuro}, enumerated from a grammar over given object-centric features \citep{silver2023predicate,chitnis2022learning}, learned as neural classifiers supervised by operator effects \citep{li2025bilevel}, or proposed by a pretrained vision-language model \citep{athalye2025predicate,liang2025visualpredicator}; with the set fixed, planners interleave symbolic search with continuous samplers \citep{garrett2020pddlstream}. In each case the admissible symbols are delimited before learning begins, by a grammar, a feature set, a code size, or a pretrained vocabulary.

The second question is answered by measurement. Candidates are scored by the likelihood they assign to observed transitions \citep{pasula2007learning}, or by planning performance on a distribution of training tasks under a particular planner \citep{silver2023predicate}. Such scores are informative but extrinsic: they report how a symbol set behaved, not what fixing its value does to the remaining problem. They also do not single out a set. The same task admits many symbol sets whose measured performance is close, and a success rate then offers no basis for choosing among them.

We take both the source and the criterion from the solver itself. Candidates are read off the structure that makes solving hard---the discrete--continuous interface for PWA systems, and the basins revealed by sampled descents for learned objectives---so no vocabulary is delimited in advance. The criterion is a property of the conditioned problem rather than a measurement: an exact LP relaxation for PWA systems, and, at the weaker strength that sampling permits, $\tau$-descent faithfulness for learned objectives (\cref{prop:tau_guarantee}). The count $k$ is then itself a measure of difficulty---the commitments that cannot be relaxed away---with the computable lower bound of \cref{prop:rat_bounds}.

\subsection{Planning Efficiency and Structural Coupling}
\label{app:rw_coupling}
This paper is concerned with planning efficiency: what a solve actually costs on a given problem. Plan existence being PSPACE-hard in general \citep{bylander1994computational}, efficiency is studied through finer structural measures, separating along two axes---how many steps must be searched, and how much must be considered jointly at each step. In heuristic search these are the expansions to the goal and the branching kept at each expansion; \citet{mao2023planning} make the same separation explicit for relational policies, classifying problems by how a policy circuit's width and depth grow with object count and horizon. Work on the first axis replaces primitive steps with temporal abstractions \citep{sutton1999between}; work on the second prunes states whose novelty exceeds a bound \citep{geffner2012width}.

Both axes trace to one property: coupling among the decisions in a plan---the extent to which they cannot be made independently. Width notions make it operational: how many atoms must hold together before a subgoal can be treated in isolation \citep{chen2007act,geffner2012width}, and the regression width of \citet{mao2023planning}, $|c|+\max_i|c_i|$, counts the conditions that must be maintained while a subgoal is achieved. When subgoals serialize, search factors and both axes shrink; otherwise they must be decided jointly. The induced width of a constraint graph measures the same thing \citep{dechter2007constraint}.
 
All of these are defined on a declared combinatorial structure: a predicate vocabulary with preconditions and effects, or a constraint graph. 
We measure coupling instead by whether the solver's relaxation is faithful using affine TU tools \citep{hoffman2010integral,bader2018mixed} for PWA systems,
sublevel persistence \citep{edelsbrunner2002topological} for learned objectives. The two are not interchangeable: total unimodularity is not a property of the constraint graph, since matrices with the same sparsity pattern can differ in it by sign alone. RATdim counts the commitments required before the LP relaxation is exact (\cref{prop:rat_bounds}), and fixing them decouples the discrete and continuous sides; $\tau$-descent faithfulness asks the same question where no transition structure is declared, with a certificate established on sampled descents rather than exactly

\section{Theoretical Foundations}\label{app:proof}
\subsection{Structural Assumptions on the Discrete Interface}
\label{app:assumptions}
\begin{assumption}
\label{ass:discrete}
The discrete matrices in \cref{eq:pwa_joint_problem} satisfy: \emph{(1)} $E^d$ is TU; \emph{(2)} $B^d\in\{0,\pm1\}^{m_B\times n_d}$.
\end{assumption}

Physically, both are mild. Condition~(1) holds when each mode variable is selected by at most one equality row, as in per-contact one-hot mode logic. Condition~(2) says that a contact either enters a bridge constraint or does not, and if it does, only with a direction; the magnitude is carried by $B^c$ and $b$. To enforce $B^d$ to be a ternary matrix, we can divide each bridge row by the magnitude of its discrete coefficients, which is available for big-$M$ gates and signed incidence.

\subsection{Numerical Complexity of Symbolic Conditioning}\label{app:rat_complexity}
This section quantifies the structural simplification induced by solver-sufficient symbols through the numerical complexity. We formally give the following proposition.
\begin{proposition}[RAT Integer-Dimension Complexity]
\label{thm:rat_complexity}
(Proof in \cref{app:rat_complexity}) Under \cref{ass:discrete}, let $W$ be a $k$-row relative affine-TU certificate, $\mathcal{P}_{\mathrm{PWA}}^{\psi}$ admits the formulation
\begin{equation}
    \mathcal{P}_{\rm PWA}^\psi: \ \min\nolimits_{z,\xi}\ c_d^\top z+c_c^\top \xi \qquad \mathrm{s.t.}\quad (z,\xi)\in \mathcal{P}_{\rm PWA}, \ Wz=\psi. \label{eq:rat_reduced_milp}
\end{equation}
For every fixed $\psi$, \cref{eq:rat_reduced_milp} is an LP. Let $m$ and $L$ be the number of constraints and input encoding length. Searching over $\psi$ and then optimizing can be solved in
\[
2^{\mathcal O(k\log n_d)}\operatorname{poly}(n_d,n_c,m,L).
\] 
The same reduction can also be viewed through Bellman lens, as shown in \cref{app:mdp}.
\end{proposition}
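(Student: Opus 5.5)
The plan has two parts: first show that, once $\psi$ is fixed, the integrality constraint on $z$ can be dropped without loss; then count the admissible symbol values $\psi$ and multiply by the cost of an LP.

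\emph{Reduction for fixed $\psi$.} Fix $\psi\in\mathbb{Z}^k$ and impose $Wz=\psi$. By \cref{def:rat_certificate}, on the set $\{Wz=\psi,\ E^dz=e^d\}$ we have
\[
P^dz=\widetilde P^dz+U_P\psi+V_Pe^d,\qquad B^dz=U_B\psi+V_Be^d.
\]
So the discrete constraints become $\widetilde P^dz\le p^d-U_P\psi-V_Pe^d$, $E^dz=e^d$, $Wz=\psi$, together with $0\le z\le 1$. The coupling constraint becomes $B^c\xi\le b-U_B\psi-V_Be^d$, which no longer involves $z$. The conditioned problem therefore separates into two independent parts:
\begin{itemize}
\item a $z$-part, minimizing $c_d^\top z$ over the system above;
\item a $\xi$-part, an LP in $\xi$ alone.
\end{itemize}
The $z$-part has constraint matrix $[M;I]$ with $M=[\widetilde P^d;E^d;W]$. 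This matrix is TU, since $M$ is TU and appending identity rows preserves TU. Its right-hand sides are integral because $U_P,V_P,e^d,\psi$ are integer. Hoffman--Kruskal then gives an integral polytope. Hence the LP relaxation of the $z$-part has a $0/1$ optimal vertex whenever it is feasible, and jointly relaxing $z\in[0,1]^{n_d}$ in \cref{eq:rat_reduced_milp} is exact. This establishes that \cref{eq:rat_reduced_milp} is an LP for every fixed $\psi$.

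One subtlety is that $U_B,V_B$ are only rational. They appear only in the $\xi$-part right-hand side, so they do not affect integrality. To obtain an integral vertex, I solve the $z$-part LP separately, or use a vertex-returning method, rather than taking an arbitrary optimum of the joint LP.

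\emph{Counting and cost.} Since $W\in\{0,\pm1\}^{k\times n_d}$ and $z\in\{0,1\}^{n_d}$, each coordinate satisfies $\psi_j\in\{-n_d,\dots,n_d\}$. So there are at most $(2n_d+1)^k=2^{\mathcal O(k\log n_d)}$ candidate values. Every feasible $(z,\xi)$ of $\mathcal{P}_{\mathrm{PWA}}$ lies in exactly one slice $Wz=\psi$, so the minimum over $\psi$ of the slice optima equals the global optimum; this is \cref{eq:symbol_guided_decomposition}. Each slice LP has $n_d+n_c$ variables and $m+\mathcal O(k+n_d)$ constraints. Its encoding length is $\operatorname{poly}(L,k\log n_d)$, and $k\le n_d$ by \cref{prop:rat_bounds}. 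It is therefore solvable in $\operatorname{poly}(n_d,n_c,m,L)$ time by the ellipsoid or interior-point method, including recovery of a vertex. Multiplying the number of slices by the per-slice cost gives the bound.

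\emph{Main obstacle.} The delicate step is the exactness argument when the coupling rows are present. TU of $M$ alone does not control the joint polytope in $(z,\xi)$. The fix is exactly the identity $B^d=U_BW+V_BE^d$: it removes $z$ from the coupling rows on the slice and yields the product structure. I would state this decoupling explicitly before invoking Hoffman--Kruskal, and I would check that the box constraints $0\le z\le1$ are included, since they are needed for boundedness and integrality.
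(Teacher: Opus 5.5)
Your proposal is correct and follows the paper's proof. You fix $\psi$, use conditions (1)--(2) of \cref{def:rat_certificate} to split the slice into $\mathcal F^d(\psi)\times\mathcal F^c(\psi)$, apply Hoffman--Kruskal to the TU system built from $[\widetilde P^d;E^d;W]$ for the $z$-part, and then enumerate the at most $(2n_d+1)^k$ values of $\psi$ with one LP each. The paper also passes through Lenstra's theorem for an intermediate $2^{\mathcal O(k^3)}$ bound, which your direct enumeration does not need; your handling of the box constraints, the rational $U_B,V_B$, and vertex recovery is more careful than the paper's, though you should also take $p^d$ integral, which is without loss of generality since $P^d$ is integral and $z$ is binary.
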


\cref{thm:rat_complexity} states that a $k$-row certificate moves the exponential dependence from the $n_d$ primitive modes to the $k$ symbols. We first recall Lenstra's theorem, which converts a bound on the number of integer variables into a running time. 

\begin{lemma}[Lenstra Theorem]\label{lem:lenstra}
A mixed-integer linear program with $n$ integer variables, arbitrary continuous
variables, $m$ constraints and rational data of encoding length $L$ is solvable in
\[
    2^{\mathcal O(n^{3})}\operatorname{poly}(n,m,L).
\]
\end{lemma}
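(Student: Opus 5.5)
Since \cref{lem:lenstra} is the classical result of Lenstra, I would reproduce the standard \emph{branching-on-hyperplanes} argument rather than derive anything new. The argument is adapted so that the continuous variables are never eliminated explicitly.

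\textbf{Reduction to feasibility.} First I reduce optimization to feasibility. The optimum of a bounded rational MILP has encoding length polynomial in $L$. Binary search on a threshold $c^\top x\le t$ therefore needs only $\operatorname{poly}(L)$ feasibility calls, each with encoding length still $\operatorname{poly}(n,m,L)$; unboundedness is detected by one LP relaxation plus one feasibility call. So it suffices to decide whether $P=\{(y,\xi):Ay+C\xi\le b\}$ contains a point with $y\in\mathbb Z^n$. Let $Q\subseteq\mathbb R^n$ be the projection of $P$ onto the $y$-coordinates. I will never form $Q$ (Fourier--Motzkin would blow up). The key observation is that linear optimization over $Q$ is a single LP over $P$, so $Q$ comes with a polynomial-time optimization oracle. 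Lower-dimensional $Q$ is handled first by computing its affine hull through LPs and reparametrizing the integer points on it with a Hermite normal form, which reduces $n$.

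\textbf{One round of rounding and branching.} With $Q$ full-dimensional, I use the ellipsoid method with this oracle (shallow cuts, Grötschel--Lovász--Schrijver) to compute an affine map $T$ and a center $c$ such that
\[
B(c,r)\subseteq TQ\subseteq B(c,R),\qquad R/r\le \operatorname{poly}(n).
\]
I then run LLL on the lattice $T\mathbb Z^n$. Write $\Lambda=T\mathbb Z^n$. Using the reduced basis, either rounding $c$ yields a lattice point within distance $2^{O(n)}$ times the covering radius of $\Lambda$, or the last Gram--Schmidt vector gives a nonzero dual lattice direction $d\in\Lambda^*$ (pulled back to an integer direction $w\in\mathbb Z^n$) in which $\Lambda$ is far apart. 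In the first case, if $r$ dominates that quantity, the point lies in $TQ$ and I am done: its $\xi$-part is recovered by one LP. Otherwise the width of $Q$ along $w$ is at most $2^{O(n^2)}$ (this is the flatness step). Hence $\{w^\top y: y\in Q\}$ contains at most $2^{O(n^2)}$ integers $\beta$. For each $\beta$ I substitute $w^\top y=\beta$, eliminate one integer variable by a unimodular change of basis, and recurse on a MILP with $n-1$ integer variables.

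\textbf{Counting and the main obstacle.} The recursion tree has depth $n$ and branching factor $2^{O(n^2)}$, so there are $2^{O(n^3)}$ nodes. Each node does polynomial work (ellipsoid rounding, LLL, and LPs), giving the bound $2^{O(n^3)}\operatorname{poly}(n,m,L)$. The step I expect to be hardest is verifying that this per-node work really stays polynomial throughout the recursion. This has two parts. First, the rounding of the \emph{projection} $Q$ must be obtained from LP access to $P$ alone, with the distortion ratio $R/r$ polynomial in $n$. Second, the bit sizes of the substituted systems must not grow along a branch of depth $n$. For the second part I would lean on the facts that LLL and the Hermite-normal-form substitutions keep entries at $\operatorname{poly}(n,L)$ bits, and that each level adds only polynomially many bits. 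The flatness/LLL constant is the only source of the $2^{O(n^2)}$ per level, and I would take it directly from the LLL approximation factor $2^{O(n)}$ combined with $R/r=\operatorname{poly}(n)$.
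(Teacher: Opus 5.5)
The paper does not prove this lemma. It only recalls it as Lenstra's theorem and uses it as a black box in the proof of \cref{thm:rat_complexity}. Your outline is the classical branching-on-hyperplanes argument, and its architecture is right.

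The first half of the obstacle you flag is fine. Membership of $y$ in $Q$ is an LP in $\xi$. When it fails, a Farkas multiplier $\lambda\ge 0$ with $\lambda^\top C=0$ yields the valid cut $\lambda^\top Ay\le\lambda^\top b$, so $Q$ has a polynomial separation oracle and GLS rounding applies.

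The genuine gap is the second half, size control: your proposed resolution does not deliver the stated bound. ``LLL and HNF keep entries at $\operatorname{poly}(n,L)$ bits'' holds for one application, measured against the size of \emph{its own} input. Along a branch, however, you apply rounding, LLL and a unimodular substitution $n$ times, each to the output of the previous level. If one level maps data of size $L_k$ to size at most $p(L_k)$ with $\deg p\ge 2$, then $L_n$ can reach order $L^{(\deg p)^n}$. That is polynomial for each fixed $n$, but it is not $2^{\mathcal O(n^3)}\operatorname{poly}(n,m,L)$ with a polynomial independent of $n$. Even naive entrywise accounting for the substituted matrices is not enough. Bounding vertex complexity through their determinants multiplies the entry length, at every level, by a factor of order $n$ plus the number of continuous variables, and that number can be as large as $L$.

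What you need is a \emph{linear} recurrence whose multiplier depends on $n$ only. One way is never to substitute:
\begin{itemize}
\item Keep the original (boxed) system $Ay+C\xi\le b$, of size $L_0=\operatorname{poly}(n,L)$, and record the branching as accumulated equations $w_1^\top y=\beta_1,\dots,w_k^\top y=\beta_k$. Write $s_j=\operatorname{size}(w_j,\beta_j)$.
\item By Hadamard's inequality, determinant sizes add over rows. So the vertex complexity of the node's projection is at most $\operatorname{poly}(n)\,(L_0+\sum_{j\le k}s_j)$.
\item GLS rounding and LLL are linear in the bit length up to $\operatorname{poly}(n)$ factors. Hence $s_{k+1}\le c(n)\,(L_0+\sum_{j\le k}s_j)$ with $c(n)=\operatorname{poly}(n)$.
\item Therefore $L_0+\sum_{j\le k}s_j\le(1+c(n))^kL_0=2^{\mathcal O(n\log n)}L_0$, and this overhead is absorbed into $2^{\mathcal O(n^3)}$.
\end{itemize}
Alternatively, cite the Frank--Tardos preprocessing, which is the standard device for keeping all numbers in Lenstra's algorithm polynomial in $L$.

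Two smaller points.
\begin{itemize}
\item The rounding $B(c,r)\subseteq TQ\subseteq B(c,R)$ presupposes that $Q$ is bounded. First add the box $\|y\|_\infty\le 2^{\operatorname{poly}(L)}$, justified by the same small-solution bound you use for the reduction to feasibility.
\item The LLL step is not an ``either/or''. Nearest-plane rounding of $c$ errs by at most $\tfrac12(\sum_i\|b_i^*\|^2)^{1/2}$. If this is at most $r$, you have a point of $TQ$. Otherwise $\|b_n^*\|\ge 2^{-(n-1)/2}\cdot 2r/\sqrt n$, so $d=b_n^*/\|b_n^*\|^2\in\Lambda^*$ gives width at most $\sqrt n\,2^{(n-1)/2}R/r=2^{\mathcal O(n)}$. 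Your own ingredients thus give $2^{\mathcal O(n)}$ slabs per level rather than $2^{\mathcal O(n^2)}$; the looser count is harmless for the stated bound.
\end{itemize}
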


\begin{proof}
After fixing the symbol $\psi$, the feasible set factorizes as in follows, and the objective separates accordingly as well. 
\begin{subequations}
\label{eq:conditioned_sets}
\begin{align}
{\mathcal F}^{d}(\psi)
&=
\left\{
z\in[0,1]^{n_d}:
E^dz=e^d,\;
Wz=\psi,\;
\widetilde P^dz\leq p^d-U_P\psi-V_Pe^d
\right\},
\label{eq:conditioned_discrete}
\\
\mathcal F^{c}(\psi)
&=
\left\{
\xi\in\mathbb{R}^{n_c}:
E^c\xi=e^c,\;
P^c\xi\leq p^c,\;
B^c\xi\leq b-U_B\psi-V_Be^d
\right\}.
\label{eq:conditioned_continuous}
\end{align}
\end{subequations}
By \cref{theo:hoffman}, $\mathcal F^{d}(\psi)$ is an integral polytope: relaxing $z$ to $[0,1]^{n_d}$ loses nothing, and $\mathcal P^{\psi}_{\rm PWA}$ is solved by a single LP. And such transform preserves the optimality of the optimization problem, since the vertices of the solution automatically fall on integer points. Therefore, the lifted program has only $k$ exact integer variables, so \cref{lem:lenstra} replaces the direct bound $2^{\mathcal O(n_d^{3})}\operatorname{poly}(\cdot)$ by $2^{\mathcal O(k^{3})}\operatorname{poly}(n_d,n_c,m,L)$.

However, we can further accelerate the solution process. In practice, we does not solve the lifted program and enumerate over remaining $k$ discrete variables; we instead searches the $\psi$-symbol space directly. Each row of $W$ lies in $\{0,\pm1\}^{n_d}$ and $z$ is binary, so $w_j^{\top}z\in[-n_d,n_d]$ and $\lvert\Psi\rvert\le(2n_d+1)^{k}$. Solving one LP under a fixed symbol commitment $\psi$ is exact, giving $2^{\mathcal O(k\log n_d)}\operatorname{poly}(n_d,n_c,m,L)$.
\end{proof}

\subsection{Bounds on Relative Affine-TU Dimension}\label{app:rat_bounds}
\cref{prop:rat_bounds} brackets the relative affine-TU dimension between a rank floor and the primitive dimension $n_d$, with the following proof.

\begin{proof}
Let $W$ be a certificate and $U_B,V_B$ the rational multipliers in condition~(2). Then $B^d=[\,U_B\ \ V_B\,]\,[W;E^d]$, so every row of $B^d$ lies in $\operatorname{rowspan}[W;E^d]$, whence $\operatorname{rowspan}[B^d;E^d]\;\subseteq\;\operatorname{rowspan}[W;E^d]$. Taking dimensions and using sub-additivity of rank under vertical concatenation,
\[
    \operatorname{rank}[B^d;E^d]\;\le\;\operatorname{rank}[W;E^d]
    \;\le\;\operatorname{rank}(W)+\operatorname{rank}(E^d)
    \;\le\;k+\operatorname{rank}(E^d),
\]
the last step because $W$ has $k$ rows. Rearranging gives the lower bound. For the right-hand bound, Since we can choose $W=\mathbf{I}_{n_d\times n_d}$ in the worst case, so it also holds.
\end{proof}

\subsection{Bellman Factorization for Relative Affine-TU}\label{app:mdp}
We first build a proposition before introducing the Bellman factorization of relative affine-TU. In our settings, the matrices $E^d,P^d,B^d$ describe the full planning horizon and encode cross-step couplings. In many robotic formulations, however, the same local constraint structure repeats at each time step. Based on the circumstance, we propose the following proposition, where the same symbolic structure can be reused across time.
\begin{proposition}[Temporal Separation of Relative Affine-TU]
\label{prop:rat_temporal_separation}
Suppose
\[
E^d=I_T\otimes \bar E^d,\qquad
P^d=I_T\otimes \bar P^d,\qquad
B^d=I_T\otimes \bar B^d.
\]
If $\bar W$ is a $\bar k$-row relative affine-TU certificate for $(\bar E^d,\bar P^d,\bar B^d)$, then $W=I_T\otimes \bar W$ is a relative affine-TU certificate for the full-horizon problem, with $k=T\cdot \bar k$.
\end{proposition}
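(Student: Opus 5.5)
The plan is to lift the single-step certificate block-diagonally and check the three conditions of \cref{def:rat_certificate} one at a time, using the mixed-product rule $(I_T\otimes X)(I_T\otimes Y)=I_T\otimes XY$.

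Let $\bar W$ be a certificate for $(\bar E^d,\bar P^d,\bar B^d)$. Then there are integer matrices $\widetilde{\bar P}^d,\bar U_P,\bar V_P$ and rational matrices $\bar U_B,\bar V_B$ with
\[
\bar P^d=\widetilde{\bar P}^d+\bar U_P\bar W+\bar V_P\bar E^d,\qquad \bar B^d=\bar U_B\bar W+\bar V_B\bar E^d,
\]
and $\bar M:=[\widetilde{\bar P}^d;\bar E^d;\bar W]$ is TU. For the full horizon I define
\[
\widetilde P^d:=I_T\otimes\widetilde{\bar P}^d,\quad U_P:=I_T\otimes\bar U_P,\quad V_P:=I_T\otimes\bar V_P,\quad U_B:=I_T\otimes\bar U_B,\quad V_B:=I_T\otimes\bar V_B.
\]
These multipliers keep the integrality or rationality of their single-step counterparts. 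The matrix $W=I_T\otimes\bar W$ has entries in $\{0,\pm1\}$ and has $T\bar k$ rows.

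Conditions (1) and (2) then follow directly. The mixed-product rule gives, for example, $U_PW=I_T\otimes\bar U_P\bar W$. Since the Kronecker product with $I_T$ is linear, summing the block identities yields $P^d=\widetilde P^d+U_PW+V_PE^d$, and likewise $B^d=U_BW+V_BE^d$.

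The main step is condition (3), namely that $M=[I_T\otimes\widetilde{\bar P}^d;\,I_T\otimes\bar E^d;\,I_T\otimes\bar W]$ is TU. I would first permute the rows of $M$ so that, for each step $t$, the rows of $\widetilde{\bar P}^d$, $\bar E^d$ and $\bar W$ belonging to that step sit together. The permuted matrix is $I_T\otimes\bar M$, i.e.\ block-diagonal with $T$ copies of $\bar M$. Row permutations change determinants only by a sign, so TU is preserved.

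It remains to show that a block-diagonal matrix with TU blocks is TU. Take any square submatrix $S$ of $I_T\otimes\bar M$, and let it use $r_t$ rows and $c_t$ columns from block $t$. If $r_t\neq c_t$ for some $t$, some block with $r_t>c_t$ contributes $r_t$ rows supported on only $c_t<r_t$ columns of $S$. Those rows are then linearly dependent, so $\det S=0$. Otherwise $S$ is block-diagonal with square blocks $S_t$, and $\det S=\prod_t\det S_t\in\{-1,0,1\}$, because each $S_t$ is a square submatrix of the TU matrix $\bar M$. This is the only nontrivial step, and the rank-counting argument above settles it.

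Hence $W$ is a relative affine-TU certificate with $k=T\bar k$ rows.
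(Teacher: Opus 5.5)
Your proof is correct and complete. The paper states this proposition without proof; it is only invoked to justify reusing $\bar W$ at every step in the Bellman factorization of Proposition~\ref{prop:rat_bellman}. So there is no written argument to compare against, and yours supplies the missing one.

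Lifting all multipliers as $I_T\otimes(\cdot)$ makes conditions (1) and (2) immediate from the mixed-product rule and linearity, and it preserves integrality and rationality. The only substantive step is condition (3). You correctly reduce it, via the row permutation $M\mapsto I_T\otimes\bar M$, to the standard fact that a block-diagonal matrix with TU blocks is TU. Your rank-counting argument handles the case $r_t\neq c_t$ cleanly, and when all $r_t=c_t$ the product-of-determinants step finishes it.

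You also read the conclusion correctly as producing a $T\bar k$-row certificate, not as a minimality claim. It therefore only gives the upper bound $\kappa\le T\,\bar\kappa$ on the full-horizon relative affine-TU dimension.
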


\cref{thm:rat_complexity} treats $\mathcal P_{\rm PWA}$ as one monolithic program. When the interface repeats in time, \cref{prop:rat_temporal_separation} lets the same certificate be reused at every step, and the reduction can be stated inside the dynamic program. We show that the symbol layer factorizes each Bellman backup exactly: it prunes mode branching within a backup without altering the value function.

First, we let $\bar W$ be a $\bar k$-row certificate for $(\bar E^{d},\bar P^{d},\bar B^{d})$ as in \cref{prop:rat_temporal_separation}, write $\xi=(s,u)$, and let the one-step constraint in MDP set be
\[
\mathcal C(s)=\bigl\{\bar E^{d}z=e^{d},\ \bar P^{d}z\le p^{d},\
\bar E^{c}\xi=e^{c},\ \bar P^{c}\xi\le p^{c},\ \bar B^{d}z+\bar B^{c}\xi\le b\bigr\}.
\]
Assume the stage cost is separable and the successor state $s'(\xi)$ does not depend on $z$. The Bellman operator is $(\mathcal TV)(s)=\min_{z\in\{0,1\}^{\bar n_d},\,\xi}\{c_d^{\top}z+c_c^{\top}\xi+V(s'(\xi)):\mathcal C(s)\}$.

\begin{proposition}[Relative affine-TU Bellman Backup]\label{prop:rat_bellman}
Let $\Psi=\bar W\{0,1\}^{\bar n_d}$, for every $V$ and $s$,
\[
(\mathcal TV)(s)=\min_{\psi\in\Psi}\bigl[g(\psi)+J_V(s,\psi)\bigr],\
g(\psi)=\min_{z\in\mathcal F^{d}(\psi)}c_d^{\top}z,\
J_V(s,\psi)=\min_{\xi\in\mathcal F^{c}(\psi)}\bigl\{c_c^{\top}\xi+V(s'(\xi))\bigr\},
\]
with $\mathcal F^{d},\mathcal F^{c}$ as in \cref{eq:conditioned_sets}. The identity is exact: the value function and its fixed point are unchanged.
\end{proposition}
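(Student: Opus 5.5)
The plan is to partition the binary feasible set of the one-step Bellman minimization by the value of the symbol $\psi=\bar W z$, show that within each cell the problem splits into a $z$-part and a $\xi$-part coupled only through $\psi$, and then relax the $z$-part using the certificate. First I write $\{0,1\}^{\bar n_d}=\bigsqcup_{\psi\in\Psi}\{z\in\{0,1\}^{\bar n_d}:\bar Wz=\psi\}$, which is a partition because $\Psi=\bar W\{0,1\}^{\bar n_d}$ is exactly the image. Hence
\[
(\mathcal TV)(s)=\min_{\psi\in\Psi}\ \min_{z\in\{0,1\}^{\bar n_d},\,\bar Wz=\psi,\ \xi}\bigl\{c_d^\top z+c_c^\top\xi+V(s'(\xi)):\mathcal C(s)\bigr\},
\]
and this step is pure bookkeeping.

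The second step is decoupling within a fixed $\psi$. I use the certificate identities $\bar B^d=U_B\bar W+V_B\bar E^d$ and $\bar P^d=\widetilde P^d+U_P\bar W+V_P\bar E^d$. On the slice $\bar Wz=\psi$, $\bar E^dz=e^d$, the bridge term becomes $\bar B^dz=U_B\psi+V_Be^d$, which is independent of $z$. The coupling constraint therefore reads $\bar B^c\xi\le b-U_B\psi-V_Be^d$, which is precisely $\mathcal F^c(\psi)$. Likewise, $\bar P^dz\le p^d$ becomes $\widetilde P^dz\le p^d-U_P\psi-V_Pe^d$.

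With the constraints split this way, the inner feasible set is a product of a $z$-set and $\mathcal F^c(\psi)$. The objective is also separable: the stage cost is separable by assumption, and $s'(\xi)$ does not depend on $z$. So the inner minimum equals
\[
\min_{z\in\mathcal F^d(\psi)\cap\mathbb Z^{\bar n_d}}c_d^\top z\;+\;J_V(s,\psi).
\]
Emptiness cases I handle by the convention $\min\emptyset=+\infty$ on both sides.

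The remaining, and main, point is that $\min_{z\in\mathcal F^d(\psi)\cap\mathbb Z^{\bar n_d}}c_d^\top z=g(\psi)$, i.e.\ the integrality can be dropped. The constraint system of $\mathcal F^d(\psi)$ is $[\widetilde P^d;\bar E^d;\bar W]$, which is TU by condition (3), together with the box $0\le z\le 1$. Appending $\pm I$ preserves TU. The right-hand sides $p^d-U_P\psi-V_Pe^d$, $e^d$, $\psi$ are integral because $U_P,V_P,\psi$ are integer matrices or vectors, and I assume $p^d,e^d$ are integral. By Hoffman--Kruskal (\cref{theo:hoffman}) the polytope is therefore integral. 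A linear objective over a nonempty bounded integral polytope attains its minimum at a vertex, which is a $0/1$ point. Conversely, every binary $z$ in the slice lies in $\mathcal F^d(\psi)$, so the two minima coincide.

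The delicate spot I expect is the integrality of the right-hand sides, since $U_B,V_B$ are only rational. I will note that these enter only $\mathcal F^c$, so they do not affect integrality of $\mathcal F^d$. Substituting back gives the stated identity for every $V$ and $s$. Since the operators then agree pointwise, their fixed points coincide.
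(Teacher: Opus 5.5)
Your proposal is correct and follows essentially the same route as the paper's proof. Both partition the binary assignments by $\psi=\bar W z$, use certificate conditions (1)--(2) to turn the inner feasible set into the product $\mathcal F^{d}(\psi)\times\mathcal F^{c}(\psi)$ with a separable objective, and then invoke condition (3) with Hoffman--Kruskal to drop integrality in $g(\psi)$. You are more careful than the paper on several points it leaves implicit: that the box rows preserve TU, that $p^d,e^d$ and $U_P,V_P$ are integral, that the rational $U_B,V_B$ enter only $\mathcal F^{c}(\psi)$, and how empty cells are treated.
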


\begin{proof}
The map $W$ partitions $z\in\{0,1\}^{\bar n_d}$ into the subproblems indexed by $\psi$, so minimizing over $z$ equals minimizing over $\psi$ and then over its subproblem. This step is a rewriting of the feasible set and uses no property of $\bar W$.

Fixing $\psi$ on $\{\bar E^{d}z=e^{d},\ \bar Wz=\psi\}$, conditions~(1) and~(2) of \cref{def:rat_certificate} give $\bar P^{d}z=\widetilde P^{d}z+U_P\psi+V_Pe^{d}$ and $\bar B^{d}z=U_B\psi+V_Be^{d}$. The second removes $z$ from the coupling row, leaving $\bar B^{c}\xi\le b-U_B\psi-V_Be^{d}$; hence the inner feasible set is the product $\mathcal F^{d}(\psi)\times\mathcal F^{c}(\psi)$. Since the stage cost is separable and $s'$ depends only on $\xi$, the inner objective splits, and the minimum of a separable objective over a product set is the sum of the two minima, giving $g(\psi)+J_V(s,\psi)$.

$\psi\in\Psi$ is integral because $\bar W\in\{0,\pm1\}^{\bar k\times\bar n_d}$ and $z$ is binary; the right-hand side of $\mathcal F^{d}(\psi)$ is therefore integral. Condition~(3) makes the defining system TU, so by \cref{theo:hoffman} the polytope is integral and $g(\psi)$ is attained at a binary $z$ even after relaxing $z$ to $[0,1]^{\bar n_d}$. Thus no step of the rewriting discards a feasible point or changes an optimal value, and $\mathcal T$ is preserved; applying it along $V_t=\mathcal TV_{t+1}$ leaves every $V_t$ unchanged.
\end{proof}

\begin{remark}
The backup no longer enumerates the $2^{\bar n_d}$ primitive mode assignments: its combinatorial
part is a search over $\Psi$, with $\lvert\Psi\rvert\le(2\bar n_d+1)^{\bar k}$, followed by one
LP for $g$ and one continuous subproblem for $J_V$. The factorization acts \emph{inside} each
backup and does not shorten the horizon; it is action-level pruning, not temporal abstraction.
\end{remark}

\subsection{Guarantee of a $\tau$-Faithfulness}\label{app:tau_guarantee}
\cref{thm:rat_complexity} is what makes $\psi=Wz$ solver-sufficient in the PWA setting. The statement below is its counterpart for \cref{def:tau_univalence}, and discharges the two requirements of the solver-sufficient symbols on the network side: (i) Searching over symbol assignments preserves optimality up to the solver resolution $\tau$; and (ii) Each assignment reduces the conditioned problem to a single confined descent, with no dependence on where that descent starts.

\begin{proposition}[$\tau$-Descent Faithfulness Guarantee]\label{prop:tau_guarantee}
Let the symbol map $\Psi :=W$ of \cref{eq:neural_symbol_map} induce the conditioned subproblems $R_\psi:=\Psi^{-1}(\psi)$, $\psi\in \Psi(\mathcal X)$, and suppose every $R_\psi$ is $\tau$-descent faithful for $h_s$. Then for any initializations $x_\psi\in R_\psi$,
\[
    \min\nolimits_{\psi\in W(\mathcal X)} v^{\tau}_{s,R_\psi}(x_\psi)
    \;\le\;
    \min\nolimits_{x\in\mathcal X}h_s(x)+\tau ,
\]
at a cost of $\lvert W(\mathcal X)\rvert$ confined descents. If moreover $h_s$ is continuous on $\mathcal X$, such a $W$ exists.
\end{proposition}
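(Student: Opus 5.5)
Two claims need to be established: the inequality for an arbitrary faithful partition, and the existence of such a $W$ when $h_s$ is continuous. For the inequality, I would fix any initializations $x_\psi\in R_\psi$ and use the fact that the regions $R_\psi$, $\psi\in W(\mathcal X)$, partition $\mathcal X$. By continuity of $h_s$ on the compact box $\mathcal X$, a global minimizer $x^\star$ exists. Let $\psi^\star:=W(x^\star)$, so $x^\star\in R_{\psi^\star}$ and $\inf_{y\in R_{\psi^\star}}h_s(y)=h_s(x^\star)$. Applying \cref{def:tau_univalence} to $R_{\psi^\star}$ gives
\[
v^\tau_{s,R_{\psi^\star}}(x_{\psi^\star})\le\sup_{x\in R_{\psi^\star}}v^\tau_{s,R_{\psi^\star}}(x)\le h_s(x^\star)+\tau .
\]
The minimum over $\psi$ is at most this term. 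The cost claim is bookkeeping: one confined descent per symbol value, i.e.\ $|W(\mathcal X)|$ descents. This value count is finite because each $q_j$ is finite-valued.

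For existence, I would construct a finite grid partition by axis-aligned quantizers. Take $u_j=e_j$ for $j=1,\dots,n$, and take $q_j$ to bin the $j$-th coordinate into intervals of width $\delta$. Each $R_\psi$ is then a small box, which is convex and hence path-connected. Continuity on compact $\mathcal X$ gives uniform continuity, so I can choose $\delta$ such that the oscillation of $h_s$ on each box is at most $\tau$. (Boundary conventions can make boxes half-open; they remain convex.)

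On such a box, any straight segment $\gamma$ from $x$ to a point $y$ stays in the box. Along it, $h_s(\gamma(t))\le \inf_{R}h_s+\tau\le\min_{r\le t}h_s(\gamma(r))+\tau$, so $\gamma$ is $\tau$-admissible. Its endpoint value $h_s(y)$ can be taken arbitrarily close to $\inf_R h_s$. Hence $v^\tau_{s,R}(x)\le\inf_R h_s\le \inf_R h_s+\tau$ for every $x\in R$, which is faithfulness.

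The main obstacle is the existence half: the regions must be $\tau$-faithful under the constraint that paths stay inside $R$, which requires connectedness and a uniform oscillation bound. The obstacle is handled by combining convex grid cells with uniform continuity. A minor subtlety is the strict path condition at $t=0$, which holds trivially because $h_s(\gamma(0))\le h_s(\gamma(0))+\tau$.
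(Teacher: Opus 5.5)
Your proof is correct and follows the paper's argument. The inequality comes from applying faithfulness to the cell $R_{\psi^\star}$ that contains a global minimizer. Existence comes from uniform continuity, which yields cells on which $h_s$ oscillates by at most $\tau$; your explicit $\delta$-grid with $u_j=e_j$ makes precise the paper's informal ``any symbol map'' with fine enough cells.

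One simplification: convexity of the cells and the straight-segment paths are not needed, so connectedness is not the obstacle you flag. The constant path $\gamma\equiv x$ is always $\tau$-admissible, and it already gives $v^\tau_{s,R}(x)\le h_s(x)\le\inf_R h_s+\tau$.
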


\begin{proof}
Let $R^{\star}$ be a subproblem containing a global minimizer, so $\min_{R^{\star}}h_s=\min_{\mathcal X}h_s$; by \cref{def:tau_univalence} applied to $R^{\star}$, $v^{\tau}_{s,R^{\star}}(x)\le\min_{R^{\star}}h_s+\tau$ for \emph{every} $x\in R^{\star}$, and the left-hand side, being a minimum over all subproblems, is at most its value at $x_{\star}$. For existence, uniform continuity of $h_s$ on the compact $\mathcal X$ gives $\delta>0$ with $\lvert h_s(x)-h_s(y)\rvert\le\tau$ whenever $\lVert x-y\rVert_{\infty}\le\delta$, so any symbol map makes each subproblem vary by at most $\tau$ and hence $\tau$-descent faithful.
\end{proof}

\section{Details of CPR for PWA Systems}
\label{app:pwa_cpr}
CPR follows CEGAR: construct a symbolization, verify it, and refine failed candidates. It groups identical columns of $[B^d;E^d;P^d]$ since they do not affect TU verification; write $(E,P,B)$ for the reduced matrices. \textit{Firstly}, starting from $\varnothing$, $W_0$ appends each row of $B$ whose integer coefficients cannot be recovered from the currently selected rows and $E$, based on \cref{prop:rat_bounds}. Note the rank lower bound is a compression target, not a guaranteed initialization size.

Given $W_i$, CPR next removes from $P$ the constraints already explained by the current symbols and $E$. Specifically, it constructs $\tilde P_i := P - U_{P,i}W_i-V_{P,i}E$. CPR then verifies the complete matrix $M_i=[\widetilde P_i;E;W_i]$ with the TU oracle. If $M_i$ is TU, $W_i$ is certified and returned; otherwise, CPR first attempts a dimension-preserving repair (\textbf{\textsc{Sparsify}}-step) and, if needed, refines $W_i$ with an additional commitment (\textbf{\textsc{Split}}-step).

\paragraph{1) \textsc{Sparsify}-Step.}
When $M_i$ is non-TU, CPR first asks whether the failure is caused only by an unfavorable basis of the current symbol space (false obstruction), rather than by a missing symbol. CPR therefore searches for alternative $\{0,\pm1\}$ bases of the same row space through bounded integer row operations, preserving $k$ while reducing row density. It additionally performs a one-time, budgeted search at the rank floor over $\operatorname{rowspan}([B;E])$. Each candidate is accepted only if bridge recovery remains valid and the recomputed $M_i$ passes the TU oracle; failure of this bounded search does not certify that a larger $k$ is necessary.

\paragraph{2) \textsc{Split}-Step.}
If the recomputed $M_i$ remains non-TU after \textbf{\textsc{Sparsify}}, CPR treats the failure as a genuine obstruction under the current symbolization and introduces an additional commitment through \textbf{\textsc{Split}}. It first localizes an inclusion-minimal non-TU core by greedily deleting rows and columns while preserving non-TU, then considers implicated columns that still participate in multi-entry rows of $W_i$. For each candidate column $c$, CPR removes its contribution from the existing rows and appends the unit row $e_c^\top$, making $e_c^\top z=z_c$ explicit:
\[
W_i^{c}=\bigl[\,W_i-(W_i)_{:,c}e_c^\top;\;e_c^\top\,\bigr].
\]
Each trial is residualized again to form its verification matrix $M^c_i=[\widetilde P_i;E;W^c_i]$ and is ranked first by TU success, then by row count. The selected split becomes $W_{i+1}$ and re-enters the same verification loop; once the full RAT conditions pass, the certificate is expanded back to the original column groups and returned.

\begin{algorithm}[H]
\footnotesize
\caption{Detailed CPR Algorithm (PWA Systems)}
\label{alg:pwa_cpr_detailed}
\textbf{Init:} PWA matrices $(E^{d},P^{d},B^{d})$; TU oracle \textsc{IsTU}\\
\textbf{Certificate:} Relative affine-TU matrix $W$
\vspace{2pt}\hrule\vspace{4pt}
\begin{algorithmic}[1]

\STATE Group identical columns of $[B^d;E^d;P^d]$ and retain reduced matrices $(E,P,B)$.
\STATE Construct $W_0$ from directions of $B$ not recoverable from $E$;
\icmt{\cref{prop:rat_bounds}}

\REPEAT

\STATE \tikzmarknode{PV1}{
Construct $\widetilde P_i=P-U_{P,i}W_i-V_{P,i}E$.
}

\STATE \tikzmarknode{PV2}{
$M_i\gets[\widetilde P_i;E;W_i]$.
}

\STATE \tikzmarknode{PV3}{
\textbf{if} \textsc{IsTU}$(M_i)$, \textbf{return} $W_i$.
} \\

\icmt{If fails, first test whether the obstruction in $W_i$ is induced by the current basis.}

\STATE \tikzmarknode{PS1}{
Re-base $W_i$ within the same row space using bounded integer row operations.
}

\STATE \tikzmarknode{PS2}{
Optionally search, once and within budget, for a verified
$\{0,\pm1\}$ basis at the rank lower bound.
}

\STATE \tikzmarknode{PS3}{
Recompute $\widetilde P_i$ and $M_i$.
\textbf{If} \textsc{IsTU}$(M_i)$, \textbf{return} $W_i$.
}

\STATE \tikzmarknode{PT1}{
$O_i\gets$ an inclusion-minimal non-TU core of $M_i$.
}

\FOR{each eligible $c\in\operatorname{cols}(O_i)$}

    \STATE \tikzmarknode{PT2}{
    $W_i^c\gets[\,W_i-(W_i)_{:,c}e_c^\top;\;e_c^\top\,]$;
    remove zero and duplicate rows.
    }

    \STATE \tikzmarknode{PT3}{
    Recompute $\widetilde P_i^c$ and
    $M_i^c=[\widetilde P_i^c;E;W_i^c]$.
    }

\ENDFOR

\STATE \tikzmarknode{PT4}{
Select a TU-passing $W_i^c$ if one exists; otherwise select the candidate with the fewest rows.
}

\STATE \tikzmarknode{PT5}{
$W_{i+1}\gets W_i^c$, $i\gets i+1$.
}
\icmt{New symbol, $k\rightarrow k+1$}

\UNTIL{$W_i$ is certified or $k=n_d$}

\STATE \textbf{Return} $W_i$.

\end{algorithmic}

\begin{tikzpicture}[
    remember picture,
    overlay,
    bx/.style={
        draw=#1,
        densely dotted,
        line width=1pt,
        rounded corners=2pt,
        fill=none,
        inner xsep=1pt,
        inner ysep=0pt
    },
    bx/.default=black!55
]

\node[
    bx=cVerify,
    fit={(PV1)(PV2)(PV3)([xshift=\algw]PV1.west)}
] (VerifyBox) {};
\node[
    anchor=south east,
    text=cVerify,
    font=\scriptsize
] at ([yshift=-1.5pt,xshift=3pt]VerifyBox.north east)
{\textbf{\textsc{Verify}}};

\node[
    bx=cSparsify,
    fit={(PS1)(PS2)([xshift=\algw]PS1.west)}
] (SparsifyBox) {};
\node[
    anchor=south east,
    text=cSparsify,
    font=\scriptsize
] at ([yshift=-1.5pt,xshift=3pt]SparsifyBox.north east)
{\textbf{\textsc{Sparsify}}};

\node[
    bx=cSplit,
    fit={(PT1)(PT2)(PT3)(PT4)(PT5)([xshift=\algw]PT1.west)}
] (SplitBox) {};
\node[
    anchor=south east,
    text=cSplit,
    font=\scriptsize
] at ([yshift=-1.5pt,xshift=3pt]SplitBox.north east)
{\textbf{\textsc{Split}}};

\end{tikzpicture}

\end{algorithm}
\begin{example}
\label{ex:cpr_running}
We consider a PWA system with discrete parameter matrix
\[
E^d=\varnothing,\qquad
P^d=\begin{bmatrix}0&0&-1&1\end{bmatrix},\qquad
B^d=
\begin{bmatrix}
0&0&1&1\\
1&-1&1&1
\end{bmatrix}.
\]
Since $\operatorname{rank}(B^d)=2$, \cref{prop:rat_bounds} gives $k_{\min}=2$.

At iteration $0$, CPR initializes
\[
W_0=
\begin{bmatrix}
0&0&1&1\\
1&-1&1&1
\end{bmatrix},
\]
corresponding to the commitments
$\psi_1=z_3+z_4$ and
$\psi_2=z_1-z_2+z_3+z_4$.
Because $P^d\notin\operatorname{rowspan}(W_0)$, we have
$\widetilde P_0^d=P^d$, and
\[
M_0=
\begin{bmatrix}
0&0&-1&1\\
0&0&1&1\\
1&-1&1&1
\end{bmatrix}.
\]
The submatrix on the first two rows and columns $3,4$ has determinant
\[
\det
\begin{bmatrix}
-1&1\\
1&1
\end{bmatrix}
=-2,
\]
so $M_0$ is not TU.

CPR first applies \textbf{\textsc{Sparsify}} by the row operation
$w_2\leftarrow w_2-w_1$, yielding
\[
W_0^{\mathrm{spa}}=
\begin{bmatrix}
0&0&1&1\\
1&-1&0&0
\end{bmatrix}.
\]
This preserves the row space and therefore introduces no new symbol. However,
$\widetilde P_0^d=P^d$ still holds, and the same determinant-$2$ obstruction remains between
$-z_3+z_4$ and $z_3+z_4$. Thus, re-basing alone cannot remove the failure.

CPR then applies \textbf{\textsc{Split}}. The minimal non-TU core involves $z_3$ and $z_4$; choosing $z_3$, CPR removes its contribution from the existing aggregate and appends the explicit commitment $e_3^\top z=z_3$:
\[
W_1=
\begin{bmatrix}
0&0&0&1\\
1&-1&0&0\\
0&0&1&0
\end{bmatrix}.
\]
The resulting symbols are
\[
\psi_1=z_4,\qquad
\psi_2=z_1-z_2,\qquad
\psi_3=z_3.
\]
Now both the bridge and the discrete inequality are explained by $W_1$:
\[
B^d=
\begin{bmatrix}
1&0&1\\
1&1&1
\end{bmatrix}W_1,
\qquad
P^d=
\begin{bmatrix}
1&0&-1
\end{bmatrix}W_1.
\]
Hence $\widetilde P_1^d=0$, and the verification matrix reduces to
\[
M_1=
\begin{bmatrix}
0&0&0&1\\
1&-1&0&0\\
0&0&1&0
\end{bmatrix},
\]
which is TU. CPR therefore returns $W_\star=W_1$ with $k=3$.

\end{example}

\section{Details of Neural CPR for Network Systems}\label{app:cpr_nn}
Neural CPR follows the same CEGAR structure, but its oracle reads the landscape from samples rather than from a matrix. Let $\mathcal S$ be the set of states drawn from a robot demonstration dataset, and $h_s$ the learned objective at state $s$, minimized over the decision space $\mathcal X$. Neural CPR starts from the empty symbol basis $(\mathcal U_0,\mathcal Q_0)=\varnothing$, so $k=0$ and all of $\mathcal X$ is a single subproblem. For each $s\in\mathcal S$ it builds a sample graph $G_s$ by running noisy multi-start descent on $h_s$ and recording the visited decisions as nodes. Edges join consecutive records along a descent and $m$-nearest neighbours. Each edge is given the  argest objective value at its two endpoints and at its midpoint, so an edge cannot report a barrier
lower than the segment actually climbs.

At iteration $i$, the current directions $\mathcal U_i$ and quantizers $\mathcal Q_i$ assign
every node in $G_s$ to its subproblem through $(q_j(u_j^\top x))_{j=1}^{k}$. 
Each subproblem $R$ is tested by flooding, a $0$-dimensional sublevel-persistence sweep
\citep{edelsbrunner2002topological} run with union--find: edges are added in increasing level, so each basin forms a pool at its own lowest value $b$ and merges into a lower pool at the level $d$ of the lowest ridge between them. With $a_R$ the lowest value sampled in $R$ and $\tau$ the declared solver resolution, a pool is flagged as violating when $d-b>\tau$, so leaving costs more than the budget, and $b>a_R+\tau$, so settling there already loses more than the budget.
Violating subproblems form $\Phi_i$, and CPR returns when $\Phi_i=\varnothing$. Otherwise it attempts the repair that costs no symbol (\textbf{\textsc{Sparsify}}) before the one that does (\textbf{\textsc{Split}}).

\paragraph{1) \textsc{Sparsify}-Step.}
CPR first asks whether a violation reflects the sampling rather than the symbols. A basin may look sealed only because no sample crossed its escape; CPR then seeds confined descents at the flagged saddle. This operation leave $k$ unchanged. Since added evidence can also expose basins that coarse sampling had hidden, $|\Phi_i|$ is not monotone across iterations.

\paragraph{2) \textsc{Split}-Step.}
Violations that survive are genuine. For each surviving subproblem, CPR takes the lowest pass $\pi$ between two of its violating basins and measures the gradient flip across two short segments through $\pi$, one joining the two basin floors and one along the merging edge; these are sign-aligned and averaged into a normal $u_{k+1}$, whose ridge point $x_\pi$ sets the threshold of the new quantizer $q_{k+1}$ at $u_{k+1}^\top x_\pi$. If $u_{k+1}$ falls within an angle tolerance of an existing direction, only a threshold is added and $k$ is unchanged; otherwise a new direction opens and $k\to k+1$. The loop repeats until no subproblem is violating or a resource cap is reached.

\begin{algorithm}[H]
\footnotesize
\caption{Detailed CPR Algorithm (Network Systems)}
\label{alg:nn_cpr_detailed}
\textbf{Init:} objective family $\{h_s\}_{s\in\mathcal S}$; resolution $\tau$\\
\textbf{Certificate:} $\tau$-descent faithful $W:=(\mathcal U,\mathcal Q)$
\vspace{2pt}\hrule\vspace{4pt}
\begin{algorithmic}[1]

\STATE $(\mathcal U_0,\mathcal Q_0)\gets\varnothing$. \icmt{$k=0$}
\STATE For each $s\in\mathcal S$, build $G_s$ by noisy multi-start descent on $h_s$.
\STATE Set each edge level to the largest value at its endpoints and midpoint.

\REPEAT

\STATE \tikzmarknode{NV1}{
Assign the nodes of $G_s$ to subproblems by $(q_j(u_j^\top x))_{j=1}^{k}$.
}

\STATE \tikzmarknode{NV2}{
Flood each subproblem $R$: add edges in increasing level, merging pools.
}

\STATE \tikzmarknode{NV3}{
Flag a violating pool born at $b$ and merged at $d$ if $d-b>\tau$ and $b>a_R+\tau$.
}

\STATE \tikzmarknode{NV4}{
$\Phi_i\gets$ the subproblems carrying a flagged pool.
}

\STATE
\textbf{If} $\Phi_i=\varnothing$, \textbf{return} $(\mathcal U_i,\mathcal Q_i)$.
\\


\STATE \tikzmarknode{NS1}{
For each $R\in\Phi_i$, add descents at its violating saddle.
}

\STATE \tikzmarknode{NS2}{
Recompute $\Phi_i$. \textbf{If} $\Phi_i=\varnothing$, \textbf{return} $(\mathcal U_i,\mathcal Q_i)$.
}

\STATE \tikzmarknode{NT1}{
\textbf{For} each surviving $R\in\Phi_i$:
}

\STATE \tikzmarknode{NT2}{
\quad $\pi\gets$ lowest pass between two violating basins of $R$.
}

\STATE \tikzmarknode{NT3}{
\quad $(u_{k+1},x_\pi)\gets$ wall normal and ridge point of the flip across $\pi$.
}

\STATE \tikzmarknode{NT4}{
\quad $\mathcal U\gets\mathcal U\cup\{u_{k+1}\}$;\;
$\mathcal Q\gets\mathcal Q\cup\{q_{k+1}\}$ cutting at $u_{k+1}^\top x_\pi$.
}

\STATE \tikzmarknode{NT5}{
$(\mathcal U_{i+1},\mathcal Q_{i+1})\gets(\mathcal U,\mathcal Q)$,\; $i\gets i+1$.
}\icmt{New symbol, $k\rightarrow k+1$}

\UNTIL{$\Phi_i=\varnothing$ or a resource cap is reached}

\STATE \textbf{Return} $W:=(\mathcal U,\mathcal Q)$.

\end{algorithmic}

\begin{tikzpicture}[
    remember picture,
    overlay,
    bx/.style={
        draw=#1,
        densely dotted,
        line width=1pt,
        rounded corners=2pt,
        fill=none,
        inner xsep=1pt,
        inner ysep=0pt
    },
    bx/.default=black!55
]

\node[bx=cVerify,
      fit={(NV1)(NV2)(NV3)(NV4)([xshift=\algw]NV1.west)}] (NVerifyBox) {};
\node[anchor=south east, text=cVerify, font=\scriptsize]
  at ([yshift=-1.5pt,xshift=3pt]NVerifyBox.north east) {\textbf{\textsc{Verify}}};

\node[bx=cSparsify,
      fit={(NS1)(NS2)([xshift=\algw]NS1.west)}] (NSparsifyBox) {};
\node[anchor=south east, text=cSparsify, font=\scriptsize]
  at ([yshift=-1.5pt,xshift=3pt]NSparsifyBox.north east) {\textbf{\textsc{Sparsify}}};

\node[bx=cSplit,
      fit={(NT1)(NT2)(NT3)(NT4)(NT5)([xshift=\algw]NT1.west)}] (NSplitBox) {};
\node[anchor=south east, text=cSplit, font=\scriptsize]
  at ([yshift=-1.5pt,xshift=3pt]NSplitBox.north east) {\textbf{\textsc{Split}}};

\end{tikzpicture}

\end{algorithm}

\section{Further Experimental Results}\label{app:further}
\subsection{Beyond Linear Symbol Axes}\label{app:nonlinear}
\begin{wrapfigure}{r}{0.40\textwidth}
    \vspace{-4mm}
    \centering
    \includegraphics[width=\linewidth]{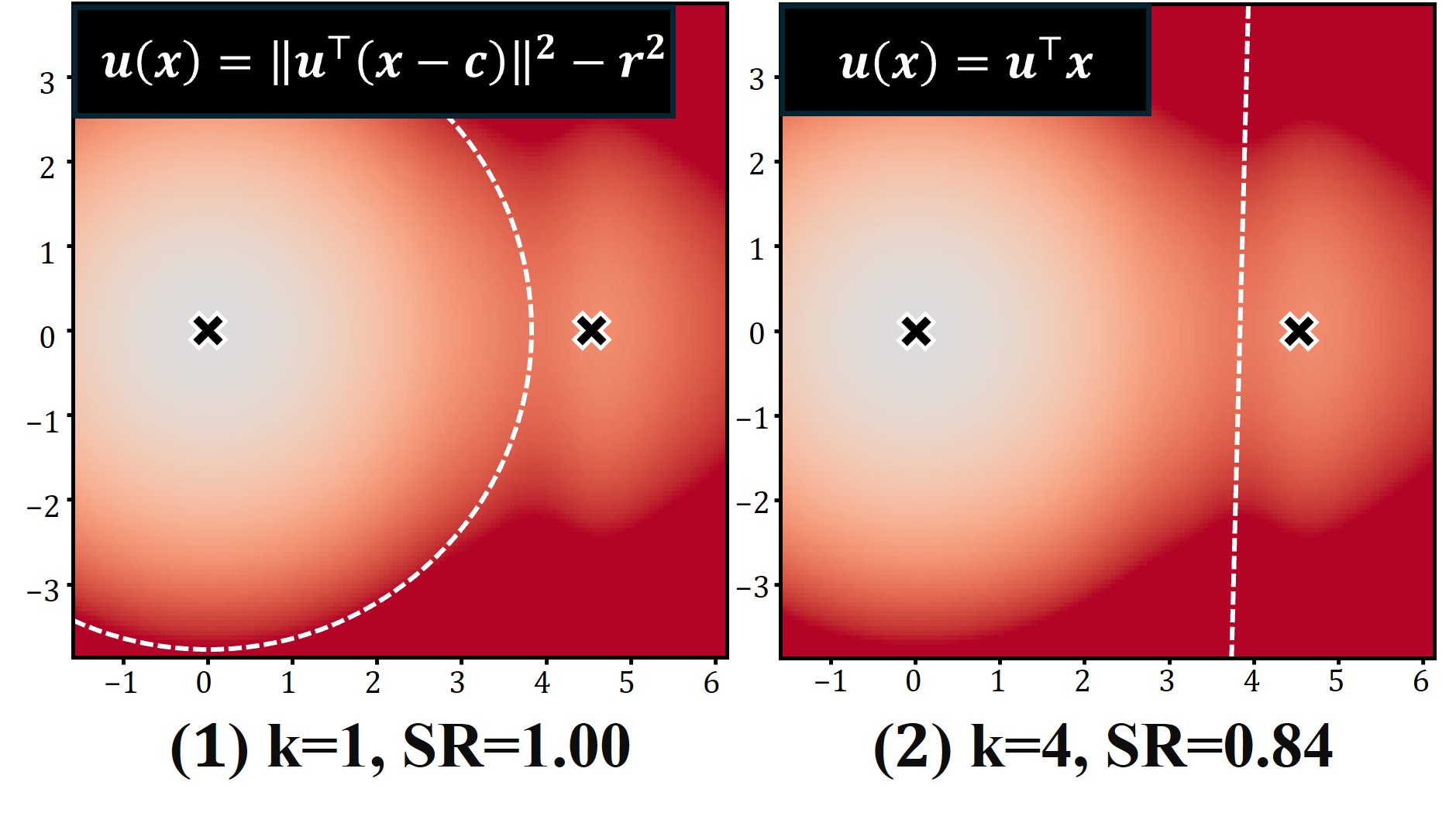}
    \vspace{-20pt}
    \caption{\textbf{Quadratic Symbol Axes.} It captures the curved decision boundary with fewer symbolic directions.}
    \label{fig:nonlinear}
    \vspace{-5pt}
\end{wrapfigure}
Whether linear partitions suffice depends on the feature space in which $x$ is expressed. When it is abstract enough that task-relevant modes are linearly separable, a few cuts $u^\top x$ separate them; when basin boundaries are curved in it, linear cuts can only approximate them piecewise, and more directions are needed. We therefore examine whether richer symbol axes can better capture such nonlinear boundaries. For example, replacing the linear projection $u^\top x$ with a quadratic form can reduce the required symbolic directions while improving success rate. However, increasingly expressive symbol functions transfer complexity from the network into the symbolic layer and weaken interpretability, limiting the value of higher-order representations.

\subsection{Ablations on Solver Resolution $\tau$}
\begin{table*}[t]
\centering

\begin{minipage}[t]{0.57\textwidth}
\vspace{0pt}
\centering

\captionof{table}{\textbf{Ablation of Resolution $\tau$.}
200 descent initializations, 600 optimization steps, $k$NN size 48.}
\label{tab:tau_ablation}

\vspace{-5pt}

{\footnotesize
\renewcommand{\arraystretch}{1.08}
\setlength{\tabcolsep}{3.2pt}

\resizebox{\linewidth}{!}{%
\begin{tabular}{ccccc@{\hspace{5mm}}ccccc}
\toprule
\multicolumn{5}{c}{\texttt{Aligning} ($H=4$)}
&
\multicolumn{5}{c}{\texttt{Avoiding} ($H=8$)}
\\
\cmidrule(lr){1-5}\cmidrule(lr){6-10}

$\tau$ & $k$ & \textbf{Violating} & \textbf{SR} & \textbf{Time (s)}
&
$\tau$ & $k$ & \textbf{Violating} & \textbf{SR} & \textbf{Time (s)}
\\
\midrule

0.05  & 6 & 87\% & 0.66 & 2350
& 0.02 & 3 & 91\% & 0.81 & 1343 \\

0.125 & 6 & 35\% & 0.72 & 1801
& 0.06 & 2 & 0\% & 1.00 & 612 \\

0.25  & 4 & 0\% & 0.84 & 962
& 0.10 & 2 & 0\% & 1.00 & 327 \\

0.50  & 2 & 0\% & 0.81 & 194
& 0.20 & 0 & 0\% & 0.97 & 14 \\

1.00  & 0 & 0\% & 0.94 & 38
& 0.38 & 0 & 0\% & 1.00 & 6 \\

\bottomrule
\end{tabular}%
}
}

\end{minipage}
\hfill
\begin{minipage}[t]{0.405\textwidth}
\vspace{0pt}
\centering

\captionof{table}{\textbf{Ablation of Sampling Evidence.}
\texttt{Avoiding} with $H=16$.}
\label{tab:evidence_ablation}

\vspace{-5pt}

{\footnotesize
\renewcommand{\arraystretch}{1.08}
\setlength{\tabcolsep}{4pt}

\resizebox{\linewidth}{!}{%
\begin{tabular}{ccccc}
\toprule
$\mathbf{k}$\textbf{NN}
& \textbf{Descent Counts}
& $k$
& \textbf{Violating}
& \textbf{Time (s)} \\
\midrule

6  & 200 & 1 & 100\% & 67   \\
12 & 200 & 1 & 100\% & 360  \\
24 & 200 & 2 & 83\%  & 692  \\
48 & 200 & 2 & 67\%  & 1105 \\
96 & 200 & 1 & 32\%  & 1203 \\
96 & 400 & 1 & 0\%   & 1514 \\

\bottomrule
\end{tabular}%
}
}

\end{minipage}

\vspace{-8pt}
\end{table*}
We vary the escape budget $\tau$ and report the discovered symbol dimension $k$, the fraction of conditioned regions that remain \emph{violating} under $\tau$-descent faithfulness, \textbf{SR}, and symbol-discovery time (\cref{tab:tau_ablation}). The results support interpreting $\tau$ as a solver resolution rather than a fitted parameter. When $\tau$ falls below within-basin descent variation, CPR over-resolves noise, producing more symbols while leaving many regions unclosed. In an intermediate range, all regions close and $k$ decreases progressively with little loss in SR. Once $\tau$ exceeds the dominant inter-basin barriers, no explicit symbolic distinction is required ($k=0$). Thus, $\tau$ can be selected from the empirical range between within-basin variation and inter-basin separation.

\subsection{Ablations on Sampling Evidence.}
We next vary the evidence available to neural CPR through graph connectivity and descent initializations (\cref{tab:evidence_ablation}; we take $\tau=0.38$, 600 optimization steps). With insufficient evidence, the inferred symbol dimension $k$ changes with the sampling configuration while many conditioned regions remain violating, indicating that the observed partition is not yet supported by the sampled landscape. Increasing the evidence budget progressively reduces the violating fraction, reaching zero with $k=1$ at 400 initializations. This trend highlights that symbol invention should be interpreted only after the sampled atlas closes: additional evidence resolves uncertain connectivity before the resulting symbolic partition is treated as stable.

\section{Tasks Details}\label{app:details}
\subsection{Task Requirements}
We evaluate six tasks (three 2D manipulation tasks, three in-hand reorientation tasks) with PWA objectives and five tasks (three D3IL tasks  \citep{jia2024towards}, two real-world tasks) with neural network settings. Task details can be found as follows.
\begin{itemize}
    \item \texttt{MPushT}.
    4 pushers translate and rotate a planar T-shaped object. There are a finite number of pushable points on each face. A rollout succeeds when its position and orientation errors are below $0.15$ and $0.15\,\mathrm{rad}$, respectively.
    \item \texttt{Hook}.
    A planar gripper manipulates a finite-volume L-shaped hook to reach around and pull a $0.318$-wide square block toward a divider. The task requires collision-free repositioning, establishing a valid hook--block contact, and sustained pulling.
    \item \texttt{Packing}.
    The controller packs $4$--$8$ randomly selected, heterogeneous rectangles into two stable support stacks inside a one-sided shelf with a $1.70\times1.31$ interior. Objects begin with randomized orientations and must respect spatial supports and constraints. 
    \item \texttt{Block}.
    A five-finger Shadow Hand reorients an $80\,\mathrm{mm}$ cube while maintaining a feasible multi-finger grasp in the IsaacLab simulator. Its contact structure distinguishes five fingers and six object faces. Terminate when the rotational error is at most $0.10\,\mathrm{rad}$.
    \item \texttt{Cuboid}.
    The Shadow Hand reorients a $55\times70\times110\,\mathrm{mm}$ cuboid under the same five-finger setting. Unlike the symmetric cube, its unequal dimensions make fingertip reachability and face transitions orientation dependent. Take $0.10\,\mathrm{rad}$ rotational-error tolerance.
    \item \texttt{Pen}.
    The hand reorients a capsule-shaped pen of length $200\,\mathrm{mm}$ and diameter $16\,\mathrm{mm}$. Contacts are organized across three finger roles and three axial pen regions, capturing both grasp redistribution and longitudinal contact changes. Take $0.10\,\mathrm{rad}$ success tolerance.
    \item \texttt{Avoiding}.
    A planar robot endpoint traverses three obstacle levels containing one, two, and three obstacles, separated vertically by $0.18\,\mathrm{m}$. The corresponding $2$, $3$, and $4$ passage choices induce $24$ collision-free route modes. Success requires crossing the goal line at $y=0.35\,\mathrm{m}$ without collision, within a $250$-step episode.
    \item \texttt{Pushing}.
    A robot pushes two $30\,\mathrm{mm}$ boxes into two $50\,\mathrm{mm}$ target regions. Either color-preserving or crossed assignment is accepted when both box--target center distances are at most $50\,\mathrm{mm}$. Combining the two assignments with their two completion orders produces four distinct solution modes.
    \item \texttt{Aligning}.
    A robot pushes a planar object to a randomized target pose from either its inner or outer side (two task modalities). Initial and target orientations span $[-90^\circ,90^\circ]$. Success requires position and orientation errors below $18\,\mathrm{mm}$ and $0.048\pi\,\mathrm{rad}$, respectively.
    \item \texttt{Sorting}. 
    A robot places multiple red and blue blocks of varying shapes into fixed color-matched baskets in arbitrary order. Success requires all blocks to be placed in the basket of the corresponding color.
    \item \texttt{Matching}.
    A robot inserts shape-specific blocks (e.g., cross and circle) into a fixed storage box through their matching apertures. Each block fits only its corresponding opening, requiring precise alignment and insertion.
\end{itemize}

\subsection{MILP Setups}
\begin{figure*}[t]
\centering

\begin{minipage}[t]{0.49\textwidth}
    \centering
    \includegraphics[width=0.58\linewidth]{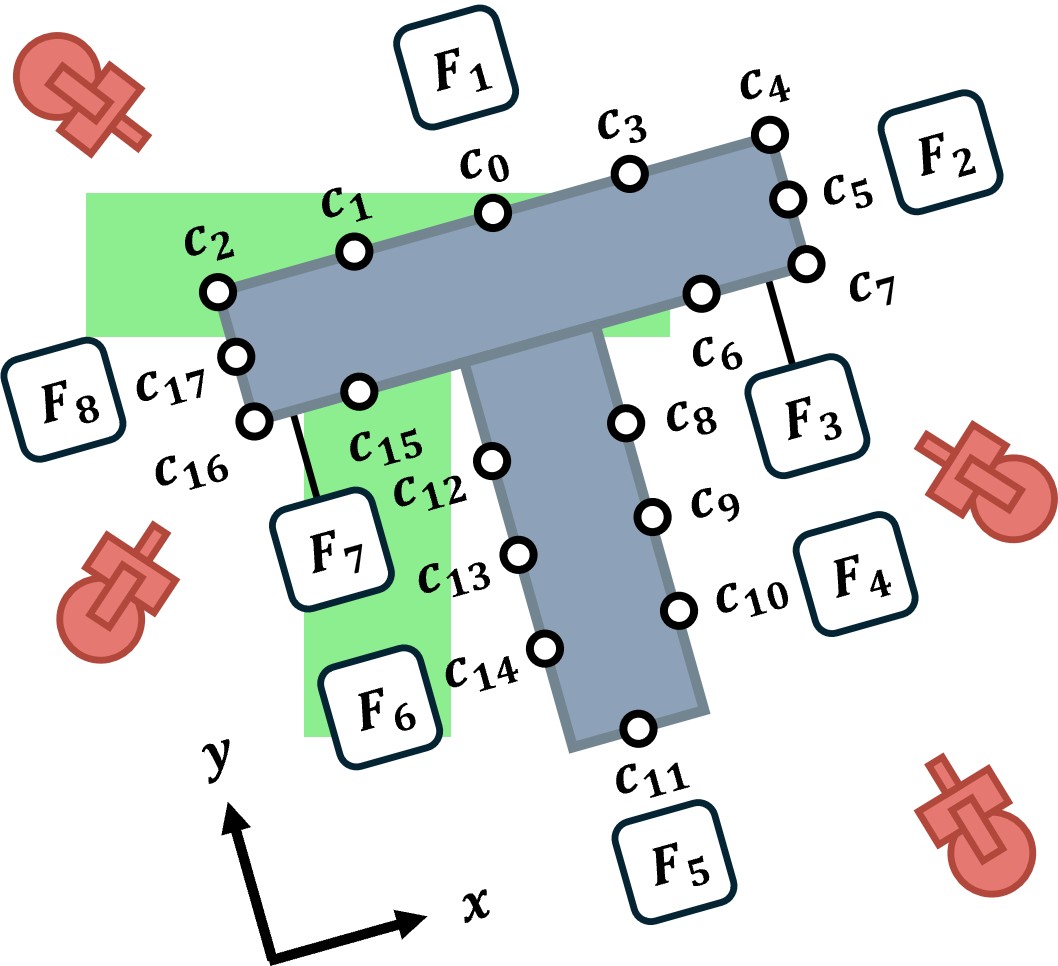}
    \caption{\texttt{MPushT}\textbf{ Contact Domain.} The object boundary is discretized into faces $F_i$ and candidate contact points $c_j$; the local $x$--$y$ axes define the object-fixed coordinate frame .}
    \label{fig:mpusht_contact}
\end{minipage}
\hfill
\begin{minipage}[t]{0.49\textwidth}
    \centering
    \includegraphics[width=0.58\linewidth]{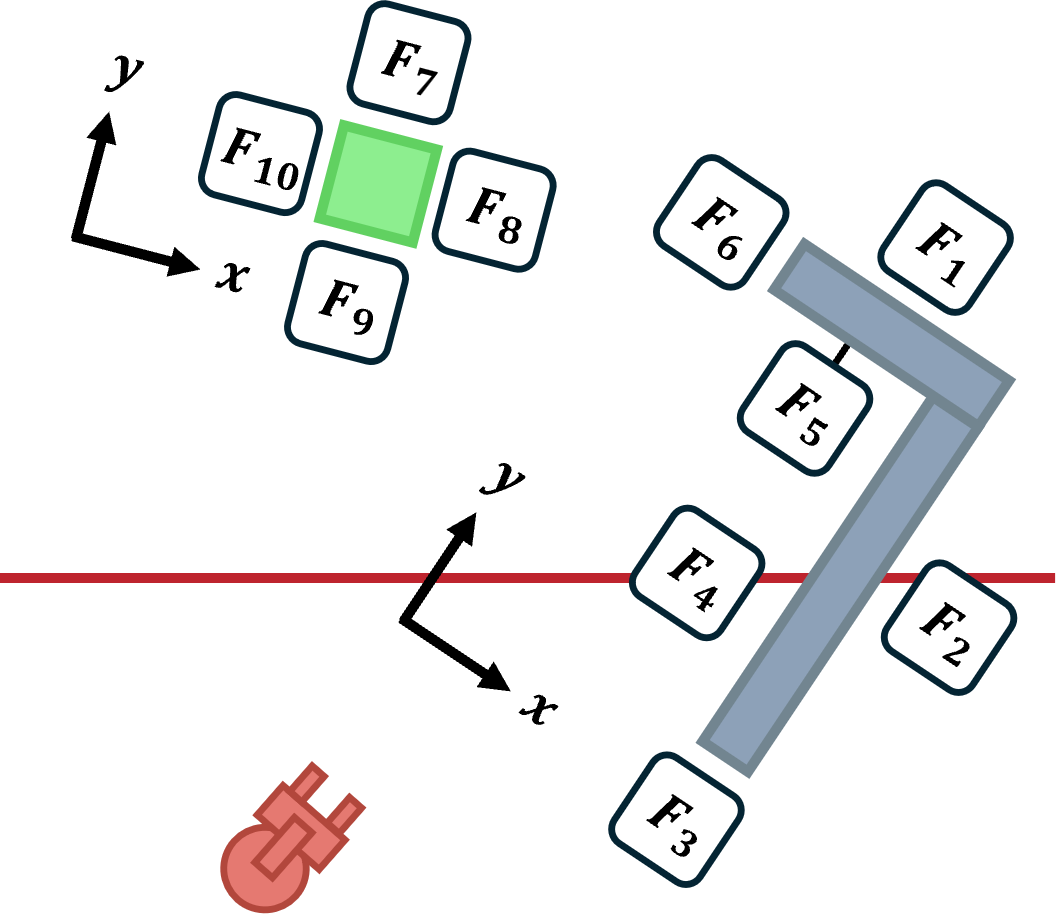}
    \caption{\texttt{Hook}\textbf{ Contact Domain.} The hook and object are represented by multiple discrete contact faces $F_i$ relative to their local $x$--$y$ coordinate frames.}
    \label{fig:hook_contact}
\end{minipage}
\end{figure*}

In this subsection, we summarize the MILP structure through its primitive discrete variables $z$, which define the mode and contact choices. We omit the continuous dynamics and full constraint matrices for space. We first outline the rules upon which our formulation is based, and then demonstrate the specific meaning of $z$ (\cref{tab:milp_z_2d} and \cref{tab:milp_z_inhand}).
\begin{itemize}
    \item{\texttt{MPushT}.}
    A T-shaped object is pushed to a goal pose through 18 candidate contact points distributed over eight faces. Each binary factor $z_j$ activates a unit push at contact $c_j$ along the inward face normal. Contacts on the same face are activated cumulatively to form contiguous pushing segments, with at most four faces active at each step (\cref{fig:mpusht_contact}).
    
    \item{\texttt{Hook}.}
    A gripper manipulates a hook to pull a block toward a divider wall. The 16 binary factors encode whether the hook is held, hook--block contact, their relative placement and facing surfaces, and whether the block can reach the wall within the current step (\cref{fig:hook_contact}).
    
    \item{\texttt{Shelf}.}
    Multiple objects are packed onto a shelf through discrete pick-and-place primitives and we take $8$ objects here as an example. The 41 factors consist of eight grasp actions, placements on the shelf floor, and stacking actions among objects. An object must be placed only on a larger supporting object. 
    
    \item{\texttt{Block}.}
    A five-finger hand reorients a cube through a discrete finger--face contact graph. We denote that, ff: index finger; mf: middle finger; rf: ring finger; lf: little finger; th: thumb. The 20 factors represent active fingers, admissible finger--face assignments, contacted object faces, and one idle unit. Flow constraints allow multiple simultaneous contacts, with each active finger assigned to one face and no face contacted twice.
    
    \item{\texttt{Cuboid}.}
    The cuboid task uses the same 20-factor finger--face representation as \texttt{Block}. The unequal object dimensions change the continuous contact feasibility and reachable orientations, while the discrete factors retain the same finger, face, and finger--face semantics.
    
    \item{\texttt{Pen}.}
    A five-finger hand reorients a pen using three functional finger groups---opposition, medial, and ulnar---and three axial contact zones---proximal, center, and distal. The 12 factors encode active groups, group--zone assignments, and occupied zones; two groups contact two distinct zones at each step.
\end{itemize}

\begin{table}[t]
\centering
\caption{\textbf{Discrete Factors for 2D Manipulation.}}
\vspace{-5pt}
\label{tab:milp_z_2d}
\small
\renewcommand{\arraystretch}{1.08}
\setlength{\tabcolsep}{4pt}
\begin{tabular}{p{0.09\linewidth} p{0.08\linewidth} p{0.75\linewidth}}
\toprule
\textbf{Task} & \textbf{Factors} & \textbf{Semantics} \\
\midrule

\multirow{8}{*}{\texttt{MPushT}}
& $z_{0:4}$   & $F_1$, push along $-y$; cumulative runs $c_0\!\rightarrow\! c_4$ \\
& $z_5$       & $F_2$, push along $-x$ at $c_5$ \\
& $z_{6:7}$   & $F_3$, push along $+y$ from $c_6$ to $c_7$ \\
& $z_{8:10}$  & $F_4$, push along $-x$ at $c_8$ and along $c_9\!\rightarrow\!c_{10}$ \\
& $z_{11}$    & $F_5$, push along $+y$ at $c_{11}$ \\
& $z_{12:14}$ & $F_6$, push along $+x$ from $c_{12}$ to $c_{14}$ \\
& $z_{15:16}$ & $F_7$, push along $+y$ from $c_{15}$ to $c_{16}$ \\
& $z_{17}$    & $F_8$, push along $+x$ at $c_{17}$ \\
\midrule

\multirow{5}{*}{\texttt{Hook}}
& $z_{0:1}$   & hook released / held by the gripper \\
& $z_{2:3}$   & hook separate from / in contact with the block \\
& $z_{4:9}$   & relative block--hook placement: body-frame axis, side, and active hook surface \\
& $z_{10:13}$ & block face oriented toward the hook: body-frame axis and sign \\
& $z_{14:15}$ & block cannot / can reach the divider wall within the current step \\
\midrule

\multirow{3}{*}{\texttt{Shelf}}
& $z_{0:7}$
& grasp object $o$ \\
& $z_{8:15}$
& place $o$ on the shelf floor \\
& $z_{16:40}$
& place $o$ on $o'$; permitted only when $o$ is smaller than $o'$ \\
\bottomrule
\end{tabular}
\end{table}

\begin{table}[t]
\centering
\caption{\textbf{Discrete Factors for In-Hand Reorientation.}}
\vspace{-5pt}
\label{tab:milp_z_inhand}
\small
\renewcommand{\arraystretch}{1.08}
\setlength{\tabcolsep}{4pt}
\begin{tabular}{p{0.09\linewidth} p{0.08\linewidth} p{0.75\linewidth}}
\toprule
\textbf{Task} & \textbf{Factors} & \textbf{Semantics} \\
\midrule

\multirow{4}{*}{\texttt{Block}}
& $z_{0:4}$
& active finger: ff, mf, rf, lf, or th \\
& $z_{5:12}$
& admissible finger--face assignment \\
& $z_{13:18}$
& contacted face: palm-facing, dorsal, thumb-side, little-side, distal, or proximal \\
& $z_{19}$
& idle unit, allowing only two active finger contacts \\
\midrule

\texttt{Cuboid}
& $z_{0:19}$
& same finger, finger--face, face, and idle-factor semantics as \texttt{Block}; contact feasibility follows the cuboid geometry \\
\midrule

\multirow{3}{*}{\texttt{Pen}}
& $z_{0:2}$
& active finger group: opposition, medial, or ulnar \\
& $z_{3:8}$
& finger-group assignment to proximal, center, or distal pen regions \\
& $z_{9:11}$
& occupied pen zone: proximal, center, or distal \\
\bottomrule
\end{tabular}
\end{table}

\subsection{\texttt{MPushT} MILP Variations}\label{app:mpusht_variation}
To study why certain human-defined concepts are useful, we further construct nine expert-designed MILP formulations of the same \texttt{MPushT} task, containing 127 primitive contact factors in total. Holding the task fixed while varying its discrete factorization allows us to analyze the computational role of different human-defined distinctions. We summarize the factors below.

\begingroup
\footnotesize
\renewcommand{\arraystretch}{1.08}
\setlength{\tabcolsep}{3.5pt}
\setlength\LTleft{\fill}
\setlength\LTright{\fill}

\begin{longtable}{
>{\centering\arraybackslash}p{0.09\linewidth}
p{0.10\linewidth}
p{0.73\linewidth}}

\caption{\textbf{Alternative Expert-Designed MPushT Factorizations.} $P_0$ is described in \cref{tab:milp_z_2d}.}
\label{tab:mpsuht_variations}\\

\toprule
\textbf{Form.} & \textbf{Factors} & \textbf{Semantics} \\
\midrule
\endfirsthead

\toprule
\textbf{Form.} & \textbf{Factors} & \textbf{Semantics} \\
\midrule
\endhead

\midrule
\multicolumn{3}{r}{\textit{(Continued on next page)}} \\
\endfoot

\bottomrule
\endlastfoot

\multirow{4}{*}{$P_1$}
& $z_{0:2}$ &
$F_1$: push at $c_0$, extend leftward to $c_1$, or move to $c_3$ \\*
& $z_{3,4,7,10,11}$ &
single pushes on $F_2,F_3,F_5,F_7,F_8$ at $c_5,c_6,c_{11},c_{15},c_{17}$ \\*
& $z_{5:6}$ &
$F_4$: push at $c_8$ or move to $c_9$ \\*
& $z_{8:9}$ &
$F_6$: push at $c_{12}$ or extend to $c_{13}$ \\
\midrule

$P_2$
& $z_{0:7}$ &
one occupancy bit per face $F_1,\ldots,F_8$, pushing at its innermost contact point \\
\midrule

\multirow{2}{*}{$P_3$}
& $z_{0:3}$ &
push from the top, bottom, right, or left side \\*
& $z_{4:5}$ &
global counter-clockwise / clockwise spin; the first active side shifts its contact to realize the requested rotation \\
\midrule

\multirow{2}{*}{$P_4$}
& $z_{0:3}$ &
single pushes from the top, bottom, right, or left \\*
& $z_{4:7}$ &
two-point couples for counter-clockwise ($z_{4:5}$) or clockwise ($z_{6:7}$) rotation\\
\midrule

\multirow{2}{*}{$P_5$}
& $z_0$ &
no push \\*
& $z_{1:26}$ &
one-hot unit-effect targets $(f_x,f_y,s)\in\{-1,0,+1\}^3\setminus\{(0,0,0)\}$, decoded to the fewest $P_0$ contacts realizing the target \\
\midrule

$P_6$
& $z_{0:17}$ &
same contact-point factors as $P_0$, but with at most two active faces per step \\
\midrule

\multirow{2}{*}{$P_7$}
& $z_{0:7}$ &
same face-occupancy factors as $P_2$ \\*
& $z_{8:17}$ &
face-specific lever offsets that move or extend the active contact along $F_1,F_3,F_4,F_6,$ or $F_7$ \\
\midrule

\multirow{2}{*}{$P_8$}
& $z_{0:7}$ &
same face-occupancy factors as $P_2$ \\*
& $z_{8:11}$ &
one / two units of global counter-clockwise ($z_{8:9}$) or clockwise ($z_{10:11}$) spin, realized by shifting the first eligible active face \\

\end{longtable}
\endgroup

\end{document}